\newtheorem{theorem}{Theorem}
\title{Neural Gaussian Similarity Modeling for Differential Graph Structure Learning}
\author {
    Xiaolong Fan\textsuperscript{\rm 1},
    Maoguo Gong \textsuperscript{\rm 1},
    Yue Wu\textsuperscript{\rm 2},
    Zedong Tang\textsuperscript{\rm 3}, and
    Jieyi Liu\textsuperscript{\rm 1},
}
\begin{document}

\maketitle

\begin{abstract}
Graph Structure Learning (GSL) has demonstrated considerable potential in the analysis of graph-unknown non-Euclidean data across a wide range of domains. However, constructing an end-to-end graph structure learning model poses a challenge due to the impediment of gradient flow caused by the nearest neighbor sampling strategy. In this paper, we construct a differential graph structure learning model by replacing the non-differentiable nearest neighbor sampling with a differentiable sampling using the reparameterization trick. Under this framework, we argue that the act of sampling \mbox{nearest} neighbors may not invariably be essential, particularly in instances where node features exhibit a significant degree of similarity. To alleviate this issue, the bell-shaped Gaussian Similarity (GauSim) modeling is proposed to sample non-nearest neighbors. To adaptively model the similarity, we further propose Neural Gaussian Similarity (NeuralGauSim) with learnable parameters featuring flexible sampling behaviors. In addition, we develop a scalable method by transferring the large-scale graph to the transition graph to significantly reduce the complexity. Experimental results demonstrate the effectiveness of the proposed methods.   
      
\end{abstract}

\section{Introduction}
In recent years, there has been a notable surge in academic attention towards Graph Neural Networks (GNNs) \cite{gcn, graphsaint, linkx, fan2023maximizing}. A plethora of graph neural network \mbox{models} have been introduced, showcasing noteworthy advancements across diverse domains such as social network analysis \cite{gao2023survey}, natural language processing \cite{meng2022gnnlm}, computer vision \cite{han2022vision}, and various other fields. The efficacy of graph neural network can be attributed to their inherent capability of effectively leveraging the abundant information present within both the structure of graph topology and the input node attributes in a concurrent manner. However, the graph structure is not invariably ascertainable. For instance, within a social network, the interconnections among users encompass sensitive privacy aspects, impeding direct access to such information.

To alleviate this issue, Graph Structure Learning (GSL) \cite{idgl,slaps,nodeformer} is proposed to jointly learn the latent graph structure and corresponding graph embeddings using structure learner and graph neural network encoder where the parameters are optimized together by the downstream task. Specifically, the structure learner first computes the similarity between node feature pairs as the edge \mbox{weights} by using similarity kernel function, such as inner-product kernel \cite{yu2021graph,zhao2021data}, cosine similarity kernel \cite{amgcn,idgl}, and diffusion kernel \cite{gasteiger2019diffusion}. Then the graph structure is generated via a structure sampling process from edge weight distribution. After producing the graph structure, graph neural network encoder takes the node features and the generated graph structure as input to produce the final node embeddings for downstream tasks. However, two fundamental weakness of this framework may limit the performance and scalability of graph structure learning method. First, \textit{the generated graph structure is not differentiable with respect to the edge weight distribution due to discrete sampling blocking gradient flow.} Second, \textit{computing the edge similarity for all pairs of graph nodes requires huge complexity for both computational time and memory consumption, rendering significant scalability issue for large graphs.}

To solve the problem of non-differentiable sampling, we utilize the concrete relaxation of the Categorical distribution by replacing the non-differentiable sampling with a differentiable sampling mechanism using the Gumbel-Softmax distribution. The Gumbel-Softmax distribution \cite{gumbel,neuralspar,gsl-vib} is a reparameterization trick that allows for the generation of discrete samples while maintaining differentiability. Note that the sampling probability is positively correlated with edge similarity, whereby greater similarity values entail higher probabilities of sampling the edges. We term this sampling approach as the linear sampling strategy. Our analysis reveals that this linear sampling method is not universally indispensable, particularly in situations where node features demonstrate a substantial level of similarity. To alleviate this issue, we propose a bell-shaped Gaussian Similarity (GauSim) modeling strategy, enabling the edge sampling probability entails an initial increase followed by a decrease as the similarity between node pairs diminishes. Note that \mbox{different} Gaussian function parameters need to be set for different \mbox{node} pairs, we further propose a Neural Gaussian Similarity (NeuralGauSim) modeling \mbox{strategy} to adaptively learn the bell-shaped Gaussian function parameters.

To address the scalability issue, we develop a transition graph structure learning method by involving the transformation of the initial node set into the more streamlined transition node set. Specifically, we first project the initial feature matrix into the transition feature matrix. Then for every node in the original graph, we calculate a similarity score between the corresponding node in the original graph and its counterpart in the transferred graph. By leveraging this similarity score, we differentiably sample the edges to generate the desired graph structure. Finally, the node embeddings can be generated by using the graph neural network encoder which takes the generated graph structure and the node features of the transition graph as input. Different from the previous anchor-based graph structure leaning method \cite{idgl}, which randomly samples a set of anchors from the initial graph, the developed transition graph method adopts projection matrices to learn to generate the transition graph, thus mitigating the information loss that typically arises from random sampling.          

Extensive experiments on graph and graph-enhanced application datasets demonstrate the superior effectiveness of the proposed method. To summarize, we outline the main contributions in this paper as follows: 
\begin{enumerate}
	\item We propose the neural Gaussian similarity modeling for differential graph structure learning to alleviate the issue of structure sampling. 
	\item We develop the transition graph structure learning by transferring the initial graph to the transition graph to reduce the complexity.  
	\item Extensive experiments on graph and graph-enhanced application datasets demonstrate the superior effectiveness of the proposed method.
\end{enumerate}

\section{Related Work}
\subsection{Graph Neural Network}
Graph Neural Networks (GNNs) aim to model the non-Euclidean data structure and have been demonstrated to achieve state-of-the-art performance on graph analysis tasks. As a unified framework for
graph neural networks, Message Passing Neural Network (MPNN) \cite{scarselli2008graph,9450014,9508847} generalizes the several existing representative graph neural networks, such as GCN \cite{gcn}, GAT \cite{gat}, GraphSAINT \cite{graphsaint}, AM-GCN\cite{amgcn}, and LINKX \cite{linkx}, which consists of two functions, i.e., message passing function and readout function. Most empirical studies of graph neural networks directly take the observed graph as input. However, the graph structure is not invariably ascertainable in practice. In this paper, we focus on the graph-unknown non-Euclidean data representation learning.

\subsection{Graph Structure Learning}
Graph Structure Learning (GSL) targets at jointly learning an optimized graph structure and its corresponding representations. A typical GSL model involves two trainable components, i.e., structure learner and graph neural network encoder. The structure learner is an encoding
function that models the optimal graph structure represented in edge weights. In recent years, several structure learners have been proposed, such as LDS \cite{lds}, Pro-GNN \cite{prognn}, IDGL \cite{idgl}, SLAPS \cite{slaps}, and NodeFormer \cite{nodeformer}, and achieved significant performance improvement. In this paper, we propose the Gaussian similarity modeling strategy and transition graph structure learning method to alleviate the issues of structure sampling and scalability. 

\section{Exploring Graph Structure Learning} 

\subsection{Problem Definition}
Let $\mathcal{G} = (\mathcal{V}, \mathcal{E})$ be a graph with $\mathcal{V}$ and $\mathcal{E}$ denoting the \mbox{node} set and edge set, respectively. The node feature matrix is denoted by $X = \{x_1,..,x_i,...x_n\}$ where $x_i \in \mathbb{R}^{d}$ is the attribute of node $v_i$. The graph structure is described by the adjacency matrix $A \in \{0, 1\}^{n\times n}$ for graphs where $A_{ij} = 1$ indicates $(v_i, v_j) \in \mathcal{E}$. In general, a GNN encoder, parameterized by $\Theta$, receives the graph structure and node features as input, then produces node embeddings $H \in \mathbb{R}^{n\times m}$ for downstream tasks. This paper primarily centers on the exploration of graph representation learning in the context of the unknown graph structure. In this setting, graph structure learning can be formulated as producing the graph structure $A^*$ and its corresponding node embeddings $H = \textrm{GNN}(A^*, X)$ with respect to the downstream tasks.

\begin{figure*}[t]
	\centering
	\includegraphics[width=2.1\columnwidth]{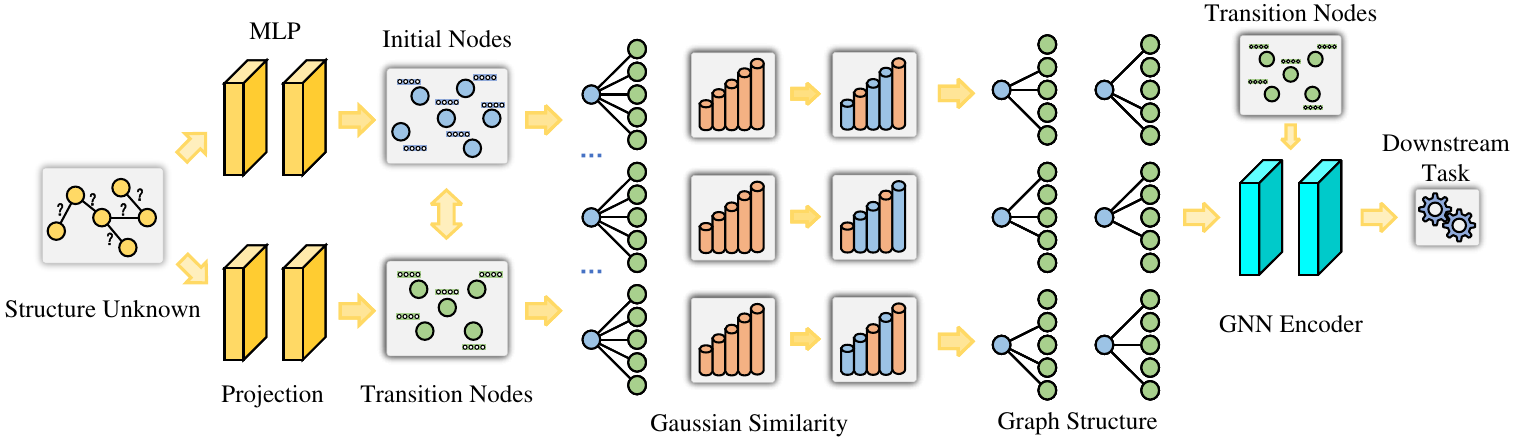}
	\caption{The proposed neural Gaussian similarity modeling and transition graph structure learning strategy. }
	\label{framework}
	\vspace{-1em}
\end{figure*}

\subsection{Differential Graph Structure Learning}
Graph structure learning aims to jointly learn the graph structure and corresponding graph embeddings. Given node features, graph structure learning first models the similarity between node feature pairs in form of 
\begin{gather}
	z_i = \textrm{MLP}(x_i) \label{eq1}\\
	\pi_{i,j} = \frac{\exp(z_iz_j^{\top})}{\sum_{w=1}^{n}\exp(z_iz_w^{\top})} \label{eq2}
\end{gather}
where $\textrm{MLP}(\cdot)$ denotes a multi-layer perceptron and $\pi_{i,j}$ denotes the edge similarity between node $v_i$ and $v_j$.
Then the graph structure $A^*$ can be generated using a structure sampling strategy that models all potential edges as a collection of independent Categorical random variables. These variables are parameterized by the learned similarity $\pi$ in form of
\begin{gather}
	A^* = \bigcup_{v_i,v_j \in \mathcal{V}}\left\{A_{i,j} \sim \textrm{Cat}(\pi_{i,j})\right\} \label{eq3}
\end{gather}  
where $A_{i,j} \sim \textrm{Cat}(\pi_{i,j})$ denotes the edge sampling process from Categorical distribution. Here, the similarity $\pi_{i,j}$ describes the edge sampling probability and smaller $\pi_{i,j}$ indicates that the edge $(v_i, v_j)$ tends to be removed. In general, we sample $\mathcal{K}$ times for each node $v_i$ to form neighbors. However, this method poses a challenge that the graph structure $A^*$ is not differentiable with respect to $\pi$ due to discrete sampling blocking gradient flow. To solve this problem, we can utilize the concrete relaxation of the Categorical distribution \cite{gumbel,neuralspar,gsl-vib} by replacing the non-differentiable sample from the Categorical distribution with a differentiable sample from the Gumbel-Softmax distribution, i.e.,
\begin{gather}
	\text{Cat}(\pi_{i,j}) \approx \frac{\exp((\log (\pi_{i,j}) + g_i)/\tau)}{\sum_{w=1}^{n}\exp((\log (\pi_{iw}) + g_w)/\tau)}
\end{gather}   
where $g_i = -\log(-\log(\epsilon))$ with $\epsilon$ randomly drawn from $\text{Uniform}(0,1)$ and $\tau \in \mathbb{R}^+$ is the temperature which controls the interpolation between the discrete distribution and continuous categorical densities. After obtaining the graph structure $A^*$, we can use a GNN encoder $\textrm{GNN}(A^*, X)$, e.g., GCN \cite{gcn}, to produce the final node embeddings for downstream tasks.

\subsection{Analysis of Structure Sampling}
Note that the sampling probability and similarity of the edge $(v_i, v_j)$ exhibit a linear relationship, wherein the greater the similarity value, the higher the probability of sampling that edge. In this paper, we aim to provide a better understanding of this sampling strategy by asking the following question: \textit{is this linear sampling strategy always necessary?} To answer this question, without loss of generality, we use GCN as GNN encoder and conduct the following theorem.  

\begin{theorem}
	Suppose the number of structure sampling be $\mathcal{K}$ for each node, $h_i$ be the feature embedding of node $v_i$, $h_j$ be the sampled neighbors of node $v_i$ where $j\in[1,\ldots,\mathcal{K}]$, and node features are normalized using 2-norm normalization. If $\lVert h_i - h_j\rVert_2 \leq \varepsilon$ for $\forall j\in [1,\ldots,\mathcal{K}]$ where $\varepsilon$ is a non-negative constant, then
	\begin{gather}
		\lVert \hat{h}_i - h_i \rVert_2 \leq \varepsilon
	\end{gather}  
	where $\hat{h}_i = \sum_{j=1}^{\mathcal{K}}\frac{1}{\sqrt{d_id_j}}h_j$ is the predict probability distribution after graph convolutional operator.
	\label{the1}
\end{theorem}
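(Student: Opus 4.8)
The plan is to recognize $\hat h_i$ as a convex combination of the sampled neighbor embeddings and then reduce the claim to a single application of the triangle inequality. First I would pin down the aggregation coefficients: since the structure sampling draws exactly $\mathcal{K}$ neighbors for every node, the sampled graph is $\mathcal{K}$-regular, so $d_i = d_j = \mathcal{K}$ for node $v_i$ and each of its sampled neighbors $v_j$. Hence each weight $\frac{1}{\sqrt{d_i d_j}}$ equals $\frac{1}{\mathcal{K}}$, the weights sum to one, and $\hat h_i = \frac{1}{\mathcal{K}}\sum_{j=1}^{\mathcal{K}} h_j$ is simply the average of the sampled neighbor embeddings. (Including self-loops changes nothing essential: $\mathcal{K}$ is replaced by $\mathcal{K}+1$ and $v_i$ is counted among its own neighbors.)

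Second, because the coefficients sum to one I can write $h_i = \frac{1}{\mathcal{K}}\sum_{j=1}^{\mathcal{K}} h_i$ and subtract, obtaining
\begin{gather}
\hat h_i - h_i = \frac{1}{\mathcal{K}}\sum_{j=1}^{\mathcal{K}} (h_j - h_i).
\end{gather}
Third, taking norms, applying the triangle inequality, and invoking the hypothesis $\lVert h_i - h_j\rVert_2 \le \varepsilon$ for every $j \in [1,\ldots,\mathcal{K}]$ gives
\begin{gather}
\lVert \hat h_i - h_i\rVert_2 \le \frac{1}{\mathcal{K}}\sum_{j=1}^{\mathcal{K}} \lVert h_j - h_i\rVert_2 \le \frac{1}{\mathcal{K}}\cdot \mathcal{K}\varepsilon = \varepsilon,
\end{gather}
which is exactly the assertion.

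I expect the only real obstacle to be the bookkeeping around the degree normalization, i.e., justifying that $\sum_{j}\frac{1}{\sqrt{d_i d_j}} = 1$ so that $\hat h_i$ is a genuine convex combination and the ``$-h_i$'' term can be distributed termwise across the sum. This is where regularity of the sampled graph is used; if one instead wants to allow non-uniform degrees, the $2$-norm normalization of the features becomes essential, since one must then bound the residual $\bigl(1 - \sum_j \tfrac{1}{\sqrt{d_i d_j}}\bigr)h_i$ using $\lVert h_i\rVert_2 = 1$. Beyond that point the argument is the routine convexity/triangle-inequality estimate above, and the normalization also guarantees $\varepsilon \in [0,2]$, so the bound is meaningful rather than vacuous.
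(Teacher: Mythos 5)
Your proof is correct, but it takes a genuinely different route from the paper's. You treat $\hat h_i$ as a convex combination (using $d_i=d_j=\mathcal{K}$ so that the weights $\tfrac{1}{\sqrt{d_id_j}}$ all equal $\tfrac{1}{\mathcal{K}}$ and sum to one), rewrite $\hat h_i - h_i = \tfrac{1}{\mathcal{K}}\sum_j (h_j - h_i)$, and finish with one application of the triangle inequality. The paper instead works with squared norms: it converts the hypothesis $\lVert h_i - h_j\rVert_2 \le \varepsilon$ into a lower bound $\langle h_i, h_j\rangle \ge \tfrac{2-\varepsilon^2}{2}$ via the polarization identity, expands $\lVert \hat h_i - h_i\rVert_2^2 = \lVert\hat h_i\rVert_2^2 + \lVert h_i\rVert_2^2 - 2\langle h_i,\hat h_i\rangle$, and bounds the cross term. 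Both arguments share the same degree bookkeeping (the paper also silently replaces $\tfrac{1}{\sqrt{d_id_j}}$ by $\tfrac{1}{\mathcal{K}}$). Your version is more elementary and, notably, tighter in its assumptions: once the weights sum to one you never need $\lVert h_i\rVert_2 = 1$, whereas the paper's final step $\sqrt{\lVert\hat h_i\rVert_2^2 + \lVert h_i\rVert_2^2 - (2-\varepsilon^2)} \le \sqrt{2-(2-\varepsilon^2)}$ requires both $\lVert h_i\rVert_2 = 1$ and $\lVert\hat h_i\rVert_2 \le 1$, the latter of which the paper never justifies (it follows precisely from your convex-combination observation). So your approach not only reaches the same conclusion but also supplies the missing lemma in the paper's own argument.
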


\begin{proof}
	Given $\lVert h_i - h_j\rVert_2 \leq \varepsilon$ for $\forall j\in [1,\ldots,\mathcal{K}]$ where $\varepsilon$ is a non-negative constant, then we have
	\begin{gather}
		\lVert h_i - h_j\rVert_2^2 \leq \varepsilon^2\notag\\
		\lVert h_i \rVert_2^2 + \lVert h_j \rVert_2^2 - 2\cdot\langle h_i, h_j \rangle \leq \varepsilon^2\\
		\langle h_i, h_j \rangle \geq \frac{\lVert h_i \rVert_2^2 +\lVert h_j \rVert_2^2 -\varepsilon^2}{2}.\notag
	\end{gather}
	After graph convolutional operator, we can denote the 2-norm difference as
	\begin{gather}
		\lVert \hat{h}_i - h_i \rVert_2
		=\sqrt{ \lVert \hat{h}_i \rVert_2^2 + \lVert h_i \rVert_2^2 - 2\cdot\langle h_i, \hat{h}_i \rangle }.
	\end{gather}
	For $\langle \hat{h}_i, h_i \rangle$, we can get 
	\begin{gather}
		\begin{aligned}
		\langle h_i, \hat{h}_i \rangle &= \langle h_i,  \sum_{j=1}^{\mathcal{K}}\frac{1}{\sqrt{d_id_j}}h_j \rangle\\
		&= \frac{1}{\mathcal{K}} \sum_{j=1}^\mathcal{K} \langle h_i, h_j \rangle \geq \frac{1}{\mathcal{K}} \sum_{j=1}^\mathcal{K} \frac{2-\varepsilon^2}{2}.
		\end{aligned}
	\end{gather}
	Therefore, the 2-norm difference can be represented as
	\begin{gather}
	\begin{aligned}
		\lVert \hat{h}_i &- h_i \rVert_2
		=\sqrt{ \lVert \hat{h}_i \rVert_2^2 + \lVert h_i \rVert_2^2 - 2\cdot\langle h_i, \hat{h}_i \rangle } \\
		&\leq \sqrt{\lVert \hat{h}_i \rVert_2^2 + \lVert h_i \rVert_2^2 - \frac{2}{\mathcal{K}} \sum_{j=1}^\mathcal{K} \frac{2-\varepsilon^2}{2}} \\
		&\leq \sqrt{2-(2-\varepsilon^2)} = \varepsilon.
	\end{aligned}
	\end{gather}
	\vspace{-1.1em}
\end{proof}
Theorem \ref{the1} tells us that the act of linear sampling may not invariably be essential, particularly in instances where node features exhibit a significant degree of similarity. In this case, the integration of the learned graph structure does not yield notable increase in informational gain. 

\section{Proposed Method}
In this section, we present the proposed Neural Gaussian Similarity Modeling to alleviate the issue of structure sampling and the Transition Graph Structure Learning to reduce the complexity of computing edge similarity. The overall framework is shown in Figure \ref{framework}. 

\subsection{Neural Gaussian Similarity Modeling}
From the analysis of structure sampling, we find that it is not invariably essential to exclusively sample the edges exhibiting the highest probability distribution, i.e., the nodes displaying the utmost similarity. To alleviate the issue, a nature idea is obtaining the Difference Similarity (DiffSim) between pair of nodes in form of 
\begin{gather}
	\pi_{i,j} = \frac{\exp(z_i(z_i-z_j)^{\top})}{\sum_{w=1}^{n}\exp(z_i(z_i-z_w)^{\top})}
\end{gather}
where $z_i(z_i-z_j)^{\top} = 1 - z_iz_j^{\top}$ when $z_i, z_j$ are normalized using 2-norm normalization. As shown in Figure \ref{graph0}, we can observe that the sampling probability of edges increases as the similarity decreases, resulting in highly dissimilar node pairs being sampled as neighbors. This difference similarity modeling does not satisfy the assumption of the homophily effect in networks \cite{xie2020gnns,ma2022is} and may harm the performance of GNN model. 

To fulfill the requirement of the edge sampling probability, which entails an initial increase followed by a decrease as the similarity between node pairs diminishes, we propose a novel Gaussian Similarity (GauSim) modeling strategy in form of 
\begin{gather}
	\phi(z_i, z_j) = \exp\left(-\frac{(z_iz_j^{\top} - b)^2}{c}\right) \label{eq11}\\
	\pi_{i,j} = \frac{\exp(\phi(z_i, z_j))}{\sum_{w=1}^{n}\exp(\phi(z_i, z_w))}
\end{gather}
where $b, c$ are the parameters of Gaussian function. By choosing appropriate $b$ and $c$, Gaussian similarity can be used as edge sampling probability to meet the nonlinear sampling requirement. Here, we set $b=0.5$ and $c=0.02e$, and the relationship between sampling probability and similarity is shown in the Figure \ref{graph0}. 

\begin{figure}[t]
	\centering
	\subfigure[LinSim]{\includegraphics[width=0.326\columnwidth]{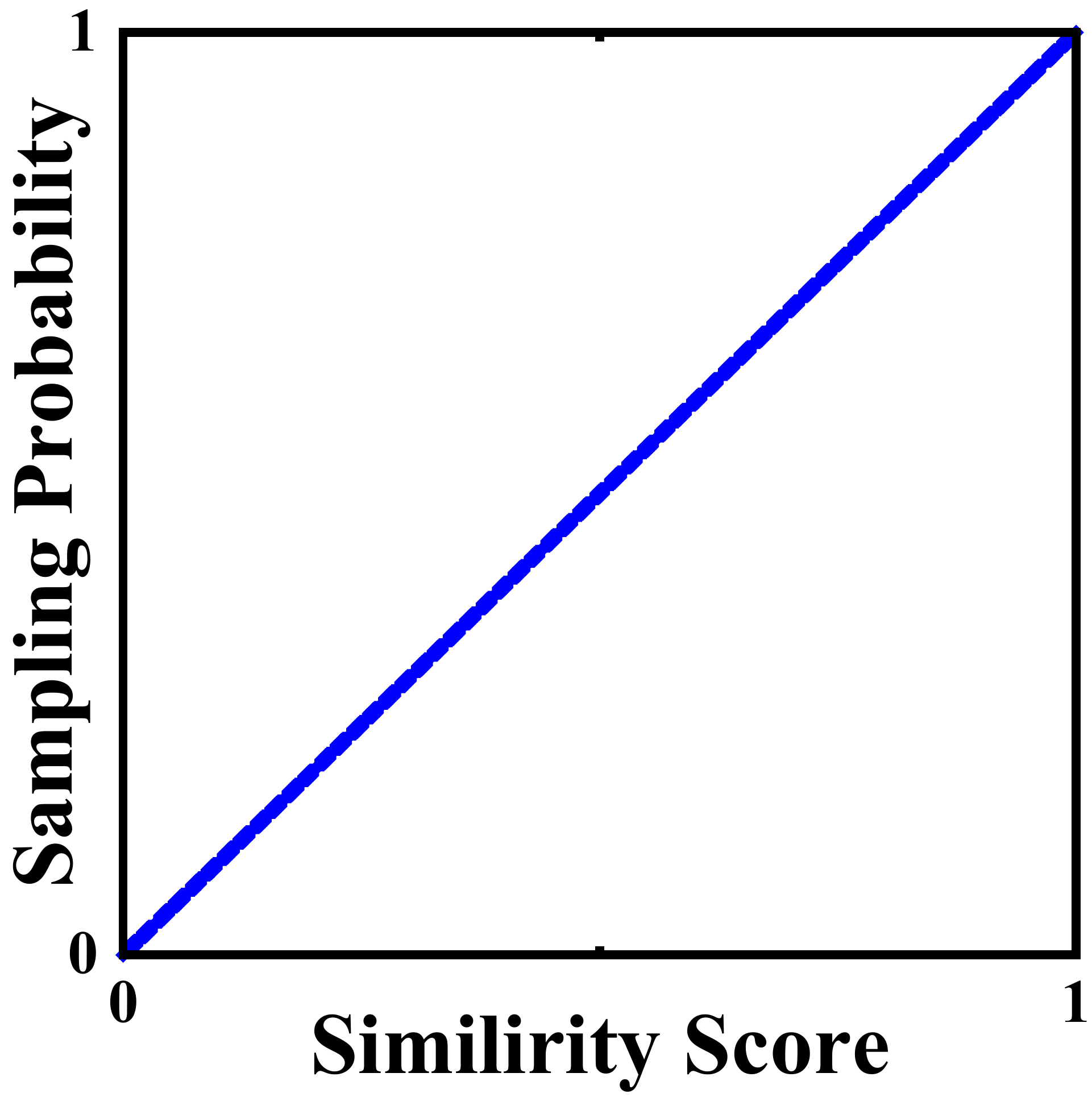}} 
	\subfigure[DiffSim]{\includegraphics[width=0.326\columnwidth]{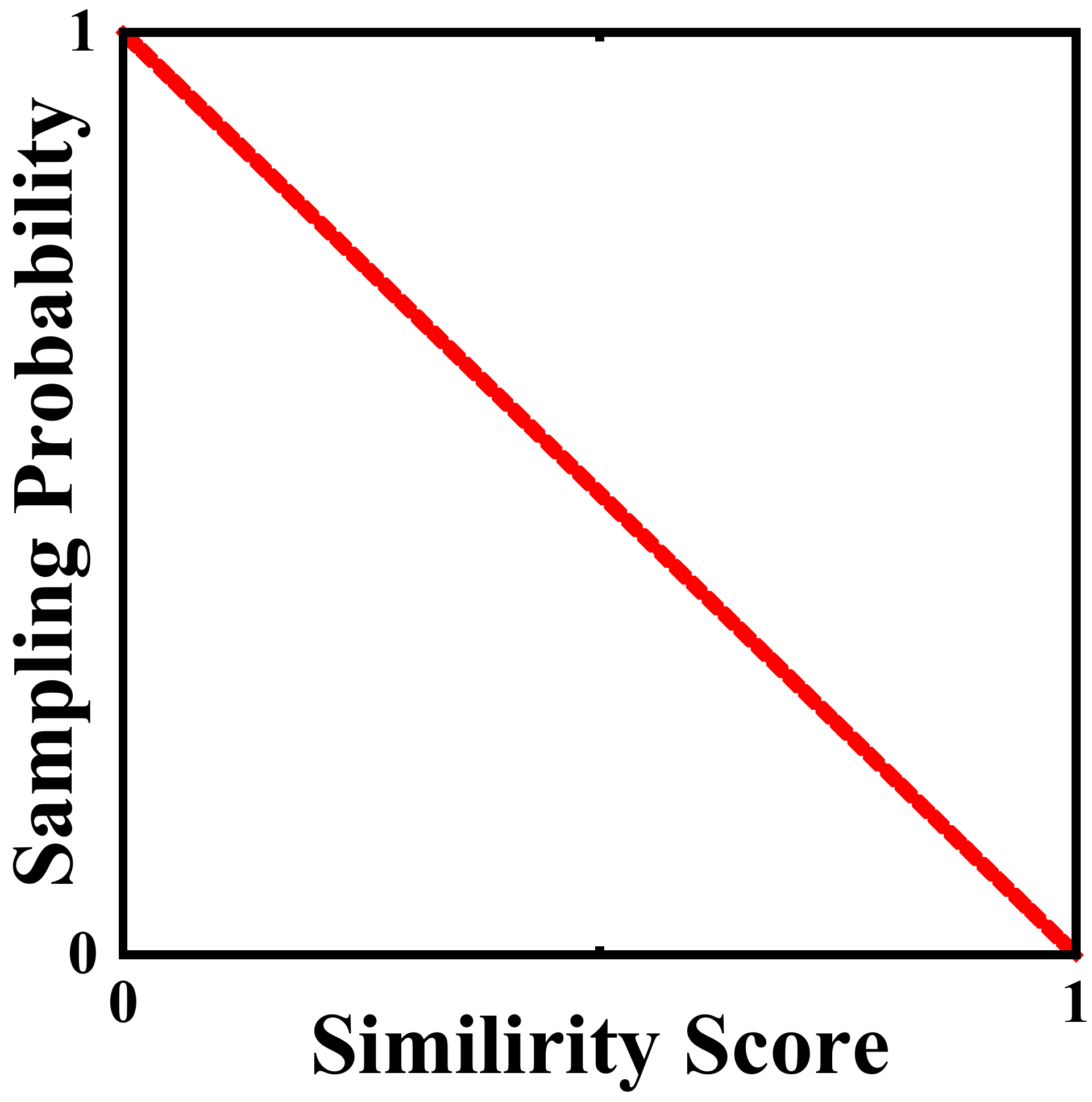}} 
	\subfigure[GauSim]{\includegraphics[width=0.326\columnwidth]{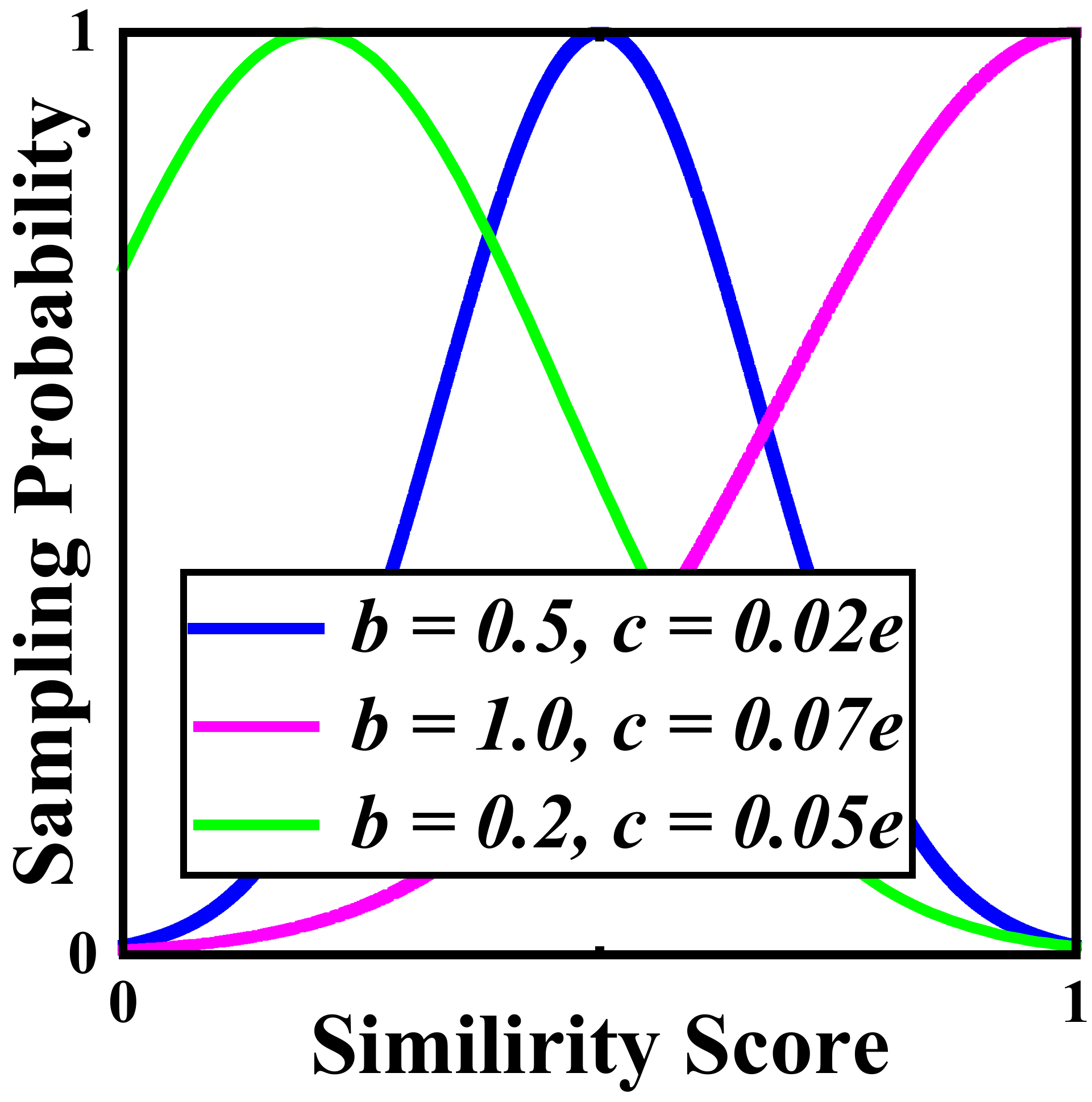}} 
	\caption{Relationship between sampling probability and similarity score. Subfigure (a) denotes the Linear Sampling (LinSim) strategy, subfigure (b) denotes the Difference Similarity (DiffSim), and subfigure (c) denotes the Gaussian Similarity (GauSim) with different parameters. }
	\label{graph0}
	\vspace{-1em}
\end{figure}

Note that different node pairs may need to set different Gaussian function parameters, we further propose a Neural Gaussian Similarity (NeuralGauSim) modeling to adaptively learn the Gaussian function parameters. Specifically, we define a parameter learning network in form of 
\begin{gather}
	b_i = \sigma(z_iw_b^{\top}), c_i = \sigma(z_iw_c^{\top})\\
	\phi(z_i, z_j) = \exp\left(-\frac{(z_iz_j^{\top} - b_i)^2}{c_i}\right) \label{eq14}\\
	\pi_{i,j} = \frac{\exp(\phi(z_i, z_j))}{\sum_{w=1}^{n}\exp(\phi(z_i, z_w))}
\end{gather}
where $w_b, w_c \in \mathbb{R}^{m}$ are the learnable parameters and $\sigma$ is Sigmoid activation function. The relationship between sampling probability and similarity with different parameters $b$ and $c$ is shown in the Figure \ref{graph0}. From this figure, we can observe that the neural Gaussian similarity modeling can generalize the sampling strategy based on the previous linear and difference similarity by setting appropriate parameters. 

\subsection{Transition Graph Structure Learning}
\begin{table*}[t]
	\centering
	\setlength{\tabcolsep}{5mm}
	\scalebox{0.67}{
		\begin{tabular}{|c|c|c|c|c|c|c|c|c|}
			\toprule
			\textsc{Method}&CiteSeer&PubMed&Chameleon&Squirrel&CS&Physics&20News&Mini-ImageNet\\
			\midrule
			\midrule
			$\text{GCN}_{knn}$&66.67$\pm$1.33&78.84$\pm$0.35&41.91$\pm$1.13&26.80$\pm$0.82&87.57$\pm$0.11&91.84$\pm$0.21&61.36$\pm$0.13&80.69$\pm$0.48\\
			$\text{HeatGCN}_{knn}$&65.07$\pm$0.52&75.20$\pm$0.28&43.23$\pm$0.20&27.21$\pm$0.55&88.21$\pm$0.13&91.96$\pm$0.13&50.17$\pm$0.61&78.14$\pm$0.17\\
			$\text{LINKX}_{knn}$&57.80$\pm$1.70&75.72$\pm$1.13&36.32$\pm$2.47&24.70$\pm$2.42&80.38$\pm$2.62&81.28$\pm$2.24&52.24$\pm$3.14&71.19$\pm$2.50\\
			SLAPS&68.23$\pm$0.22&79.55$\pm$0.23&43.44$\pm$0.36&27.28$\pm$0.96&85.74$\pm$2.12&92.14$\pm$2.13&60.97$\pm$0.20&80.23$\pm$1.14\\
			IDGL-Full&68.67$\pm$0.32&OOM&45.47$\pm$0.76&28.23$\pm$0.19&OOM&OOM&62.53$\pm$0.24&OOM\\
			IDGL-Anchor&67.31$\pm$0.07&79.69$\pm$0.43&45.35$\pm$1.23&28.08$\pm$1.17&89.15$\pm$0.81&OOM&61.91$\pm$0.26&80.51$\pm$0.61\\
			NodeFormer&67.19$\pm$0.28&82.57$\pm$0.45&47.02$\pm$0.61&27.80$\pm$0.75&\textbf{91.77$\pm$0.10}&\textbf{94.81$\pm$0.11}&62.70$\pm$0.84&83.10$\pm$0.49\\
			\midrule
			LinSim&67.23$\pm$0.23&OOM&44.91$\pm$1.10&27.26$\pm$0.95&OOM&OOM&54.40$\pm$1.07&OOM\\
			LinSim-T&69.51$\pm$0.60&81.41$\pm$1.31&43.63$\pm$1.38&27.75$\pm$0.54&87.67$\pm$1.55&93.88$\pm$0.19&60.84$\pm$0.26&81.94$\pm$0.14\\
			\midrule
			GauSim&69.19$\pm$0.14&OOM&47.84$\pm$0.54&28.13$\pm$0.87&OOM&OOM&62.49$\pm$0.20&OOM\\
			GauSim-T&\textbf{70.19$\pm$0.84}&\underline{82.62$\pm$0.51}&47.19$\pm$1.16&27.80$\pm$0.67&88.94$\pm$0.47&94.13$\pm$0.32&61.35$\pm$0.56&\underline{83.33$\pm$0.88}\\
			\midrule
			NeuralGauSim&69.31$\pm$0.35&OOM&\underline{48.02$\pm$0.20}&\textbf{28.88$\pm$1.08}&OOM&OOM&\underline{62.85$\pm$0.17}&OOM\\
			NeuralGauSim-T&\underline{70.15$\pm$0.37}&\textbf{82.65$\pm$0.74}&\textbf{48.25$\pm$0.63}&\underline{28.57$\pm$0.36}&\underline{89.22$\pm$1.55}&\underline{94.64$\pm$0.10}&\textbf{62.89$\pm$1.10}&\textbf{84.89$\pm$0.37}\\
			\bottomrule
	\end{tabular}}
	\caption{Experimental results comparison with baselines. Here, OOM denotes Out-of-Memory.}
	\label{tab1}
\end{table*}

\begin{table*}[t]
	\centering
	\setlength{\tabcolsep}{13mm}
	\scalebox{0.67}{
		\begin{tabular}{|c|c|c|c|c|c|}
			\toprule
			&\textsc{Method}&CiteSeer&Chameleon&Squirrel&20News\\
			\midrule
			\midrule
			\multirow{3}*{$\mathcal{K}$=5}
			&LinSim&67.23$\pm$0.23&44.91$\pm$1.10&27.26$\pm$0.95&54.40$\pm$1.07\\
			&GauSim&69.19$\pm$0.14&47.84$\pm$0.54&28.13$\pm$0.87&62.49$\pm$0.20\\
			&NeuralGauSim&\textbf{69.31$\pm$0.35}&\textbf{48.02$\pm$0.20}&\textbf{28.88$\pm$1.08}&\textbf{62.85$\pm$0.17}\\
			\midrule
			\multirow{3}*{$\mathcal{K}$=10}
			&LinSim&66.10$\pm$0.82&43.63$\pm$0.82&26.59$\pm$0.87&\multirow{3}*{OOM}\\
			&GauSim&68.31$\pm$0.52&46.73$\pm$0.54&27.29$\pm$0.67&\\
			&NeuralGauSim&\textbf{68.35$\pm$0.98}&\textbf{46.84$\pm$0.46}&\textbf{27.47$\pm$0.62}&\\
			\midrule
			\multirow{3}*{$\mathcal{K}$=15}
			&LinSim&65.55$\pm$0.82&42.79$\pm$1.95&25.34$\pm$1.14&\multirow{3}*{OOM}\\
			&GauSim&69.03$\pm$0.21&47.54$\pm$0.70&28.03$\pm$1.05&\\
			&NeuralGauSim&\textbf{69.55$\pm$0.96}&\textbf{47.49$\pm$1.67}&\textbf{28.49$\pm$0.35}&\\
			\midrule
			\multirow{3}*{$\mathcal{K}$=20}
			&LinSim&64.39$\pm$0.76&41.73$\pm$1.09&25.13$\pm$1.14&\multirow{3}*{OOM}\\
			&GauSim&68.47$\pm$0.97&47.66$\pm$0.44&26.77$\pm$1.19&\\
			&NeuralGauSim&\textbf{68.69$\pm$0.52}&\textbf{48.84$\pm$0.98}&\textbf{27.95$\pm$0.77}&\\
			\bottomrule
	\end{tabular}}
	\caption{Experimental results comparison with different samples $\mathcal{K}$ for full-graph models. Here, OOM denotes Out-of-Memory.}
	\label{tab2}
	\vspace{-1em}
\end{table*}

The edge similarity computes similarity scores for all pairs of graph nodes, which requires $\mathcal{O}(n^2)$ complexity for both computational time and memory consumption, rendering significant scalability issue for large graphs. To address the scalability issue, we develop a transition graph similarity modeling method by transferring the initial graph comprising $n$ nodes to the more streamlined transition graph containing $s$ nodes. Specifically, we first project the initial nodes $X \in \mathbb{R}^{n\times d}$ into the transition nodes in form of 
\begin{gather}
	R = W_tX \in \mathbb{R}^{s\times d}
\end{gather} 
where $W_t \in \mathbb{R}^{s\times n}$ is the projection matrix and $R = \{r_1,...,r_s\}$ is transition node features. Then for each nodes of the initial graph, the similarity scores are calculated by 
\begin{gather}
	z_i = \textrm{MLP}(x_i) \\
	\pi_{i,j} = \frac{\exp(\phi(z_i, r_j))}{\sum_{w=1}^{s}\exp(\phi(z_i, r_w))}
\end{gather}
where $\phi(z_i, r_i)$ can use the Eqn. \eqref{eq11} and Eqn. \eqref{eq14} as the similarity measurement. Note that for node $v_i$, the calculation range of edge similarity score is the transition graph with $s$ nodes instead of the original graph with $n$ nodes, which requires $\mathcal{O}(ns)$ complexity for both computational time and memory consumption. The parameter $s$ is often set as $s\ll n$ in practice, hence the complexity of computing similarity scores requires $\mathcal{O}(n)$ for both computational time and memory consumption.
After obtaining the edge similarity scores, we can utilize the concrete relaxation of Categorical distribution to differentiably sample the edges by  
\begin{gather}
	A^*_t = \bigcup_{v_i \in \mathcal{V}, v_j \in \mathcal{V}_t}\left\{A_{i,j} \sim \textrm{Cat}(\pi_{i,j})\right\} \in \mathbb{R}^{n\times s}
\end{gather}  
where $\mathcal{V}_t$ denotes the node set of the transition graph. Finally, we can utilize the GCN encoder $\textrm{GNN}(\cdot)$ to produce the node embeddings in form of 
\begin{gather}
	X_t = W_eX \in \mathbb{R}^{s\times d}\\
	H = \textrm{GNN}(A^*_t, X_t) \in \mathbb{R}^{n\times m}
\end{gather} 
where $W_e \in \mathbb{R}^{s\times n}$ is the learnable projection matrix and $H$ is the final node embedding matrix. Different from the previous anchor-based graph structure leaning method \cite{idgl}, which randomly samples a set of anchors from the initial graph, the developed transition graph method uses projection matrices to learn the node transition strategy, thus mitigating the information loss resulting from random sampling.  

\section{Experiment}

\begin{table*}[t]
	\centering
	\setlength{\tabcolsep}{4mm}
	\scalebox{0.7}{
		\begin{tabular}{|c|c|c|c|c|c|c|c|c|c|}
			\toprule
			&\textsc{Method}&CiteSeer&PubMed&Chameleon&Squirrel&CS&Physics&20News&Mini-ImageNet\\
			\midrule
			\midrule
			\multirow{3}*{\rotatebox{90}{$\mathcal{K}$=5}}
			&LinSim-T&69.51$\pm$0.60&81.41$\pm$1.31&43.63$\pm$1.38&27.75$\pm$0.54&87.67$\pm$1.55&93.88$\pm$0.19&60.84$\pm$0.26&81.94$\pm$0.14\\
			&GauSim-T&\textbf{70.19$\pm$0.84}&82.62$\pm$0.51&47.19$\pm$1.16&27.80$\pm$0.67&88.94$\pm$0.47&94.14$\pm$0.32&61.35$\pm$0.56&83.33$\pm$0.88\\
			&NeuralGauSim-T&70.15$\pm$0.37&\textbf{82.65$\pm$0.74}&\textbf{48.25$\pm$0.63}&\textbf{28.57$\pm$0.36}&\textbf{89.22$\pm$1.55}&\textbf{94.64$\pm$0.10}&\textbf{62.89$\pm$1.10}&\textbf{84.89$\pm$0.37}\\
			\midrule
			\multirow{3}*{\rotatebox{90}{$\mathcal{K}$=10}}
			&LinSim-T&64.23$\pm$0.70&79.64$\pm$1.76&42.40$\pm$1.06&25.18$\pm$1.50&83.49$\pm$1.18&92.30$\pm$1.36&59.60$\pm$0.77&80.65$\pm$0.98\\
			&GauSim-T&69.03$\pm$0.68&80.42$\pm$1.19&47.49$\pm$1.07&27.08$\pm$1.12&88.63$\pm$0.69&\textbf{94.12$\pm$0.76}&\textbf{60.36$\pm$0.12}&82.17$\pm$1.24\\
			&NeuralGauSim-T&\textbf{69.19$\pm$0.76}&\textbf{80.69$\pm$1.62}&\textbf{47.66$\pm$0.20}&\textbf{27.57$\pm$0.35}&\textbf{89.79$\pm$1.17}&94.06$\pm$0.60&60.06$\pm$0.57&\textbf{83.53$\pm$0.53}\\
			\midrule
			\multirow{3}*{\rotatebox{90}{$\mathcal{K}$=15}}
			&LinSim-T&64.75$\pm$0.92&76.46$\pm$1.12&40.94$\pm$1.16&24.42$\pm$1.98&79.48$\pm$1.54&88.02$\pm$1.86&58.87$\pm$0.61&80.55$\pm$0.74\\
			&GauSim-T&70.03$\pm$0.57&81.61$\pm$0.43&47.60$\pm$0.44&28.16$\pm$0.38&89.51$\pm$0.34&92.17$\pm$1.87&60.19$\pm$1.07&82.64$\pm$0.71\\
			&NeuralGauSim-T&\textbf{70.39$\pm$0.80}&\textbf{81.70$\pm$0.55}&\textbf{47.31$\pm$0.51}&\textbf{28.49$\pm$0.42}&\textbf{90.49$\pm$0.49}&\textbf{92.89$\pm$1.67}&\textbf{60.36$\pm$0.97}&\textbf{82.87$\pm$0.97}\\
			\midrule
			\multirow{3}*{\rotatebox{90}{$\mathcal{K}$=20}}
			&LinSim-T&64.31$\pm$0.53&75.03$\pm$1.57&41.35$\pm$1.62&24.62$\pm$1.13&71.68$\pm$2.92&87.91$\pm$1.39&57.54$\pm$1.11&79.70$\pm$1.14\\
			&GauSim-T&\textbf{68.99$\pm$0.92}&80.73$\pm$1.35&47.99$\pm$0.88&28.29$\pm$0.35&87.87$\pm$1.17&92.30$\pm$0.76&59.04$\pm$0.44&80.24$\pm$0.70\\
			&NeuralGauSim-T&68.59$\pm$0.98&\textbf{81.50$\pm$0.87}&\textbf{48.04$\pm$0.46}&\textbf{28.54$\pm$0.38}&\textbf{88.54$\pm$1.12}&\textbf{93.14$\pm$0.59}&\textbf{59.59$\pm$0.76}&\textbf{81.64$\pm$0.97}\\
			\bottomrule
	\end{tabular}}
	\caption{Experimental results comparison with different samples $\mathcal{K}$ for transition-graph models.}
	\label{tab3}
	\vspace{-0.5em}
\end{table*}

\subsection{Datasets}

To evaluate the developed methods, we use eight commonly used graph datasets, including two citation network datasets, i.e., CiteSeer and PubMed \cite{yang2016revisiting}, two Wikipedia network datasets, i.e., Chameleon and squirrel \cite{geom-gcn}, two coauthor network datasets, i.e., CS and Physics \cite{shchur2018pitfalls}, and two graph-enhanced application datasets, i.e., 20News and Mini-ImageNet \cite{nodeformer}. 

\subsection{Baselines}
We compare the proposed method with prominent existing graph structure learning baselines, including $\text{GCN}_{knn}$ \cite{gcn}, $\text{HeatGCN}_{knn}$ where the heat \mbox{kernel} \cite{amgcn} serves as similarity \mbox{measurement}, $\text{LINKX}_{knn}$ \cite{linkx}, SLAPS \cite{slaps}, IDGL-Full, IDGL-Anchor \cite{idgl}, and NodeFormer \cite{nodeformer}. For kNN based methods, we create a kNN graph based on the node feature similarities and feed this graph to the GNN model. 

\subsection{Configurations}
We implement the proposed model as well as the baselines with the well-known Pytorch Geometric library and all experiments conduct on a NVIDIA GeForce 3090 card with 24GB memory.
The experiments are repeated five times for all algorithms to finally report the average accuracy and the corresponding standard deviation. 
For all datasets, we randomly sampled $50\%$ of nodes for training set, $25\%$ for validation set, and $25\%$ for test set.
The Adam optimizer are used with the learning rate $0.01$ for CiteSeer dataset and $0.001$ for other datasets. We set the hidden dimension to be $256$ for Mini-ImageNet dataset, $32$ for other datasets, and the number of transition-graph nodes to $500$. The weight parameters are initialized using Glorot initialization and the bias parameters using zero initialization. For parameters $c$ of NeuralGauSim, we multiply $c$ by a scale factor of $0.1$.
Following NodeFormer \cite{nodeformer}, the temperature $\tau$ is set to be $0.25$. We add dropout layers with dropout probabilities of $0.5$ after the first layer of the GNNs, and train two-layer GNNs. In addition, we set all experiments with the same seed to achieve a fair comparison.

\subsection{Results Comparison with Baselines}
To verify the performance of the developed models in comparison to baselines, we conduct comparative experiments and report the experimental results in Table \ref{tab1}. In this experiment, we set the parameter K of kNN to be $5$ for kNN based baselines, and the neighbor mask parameter to be $0$ for IDGL-Full and IDGL-Anchor. From this table, we have the following observations. First, compared with the previous graph structure learning baselines, the proposed Gaussian similarity modeling methods achieve better performance on six datasets out of eight, indicating the effectiveness of the differential graph structure learning. Second, compared with the linear similarity, the proposed Gaussian similarity modeling methods consistently achieve better performance, demonstrating the effectiveness of the proposed methods. Third, comparing with transition-graph based models, i.e., LinSim-T, GauSim-T, and NeuralGauSim-T, and full-graph models, i.e. LinSim, GauSim, and NeuralGauSim, we can observe that transition-graph based models achieves performance improvements while reducing complexity. This implies the effectiveness of introducing the learnable transition-graph strategy.  

\begin{figure}[t]
	\centering
	\subfigure[CiteSeer]{\includegraphics[width=0.242\columnwidth]{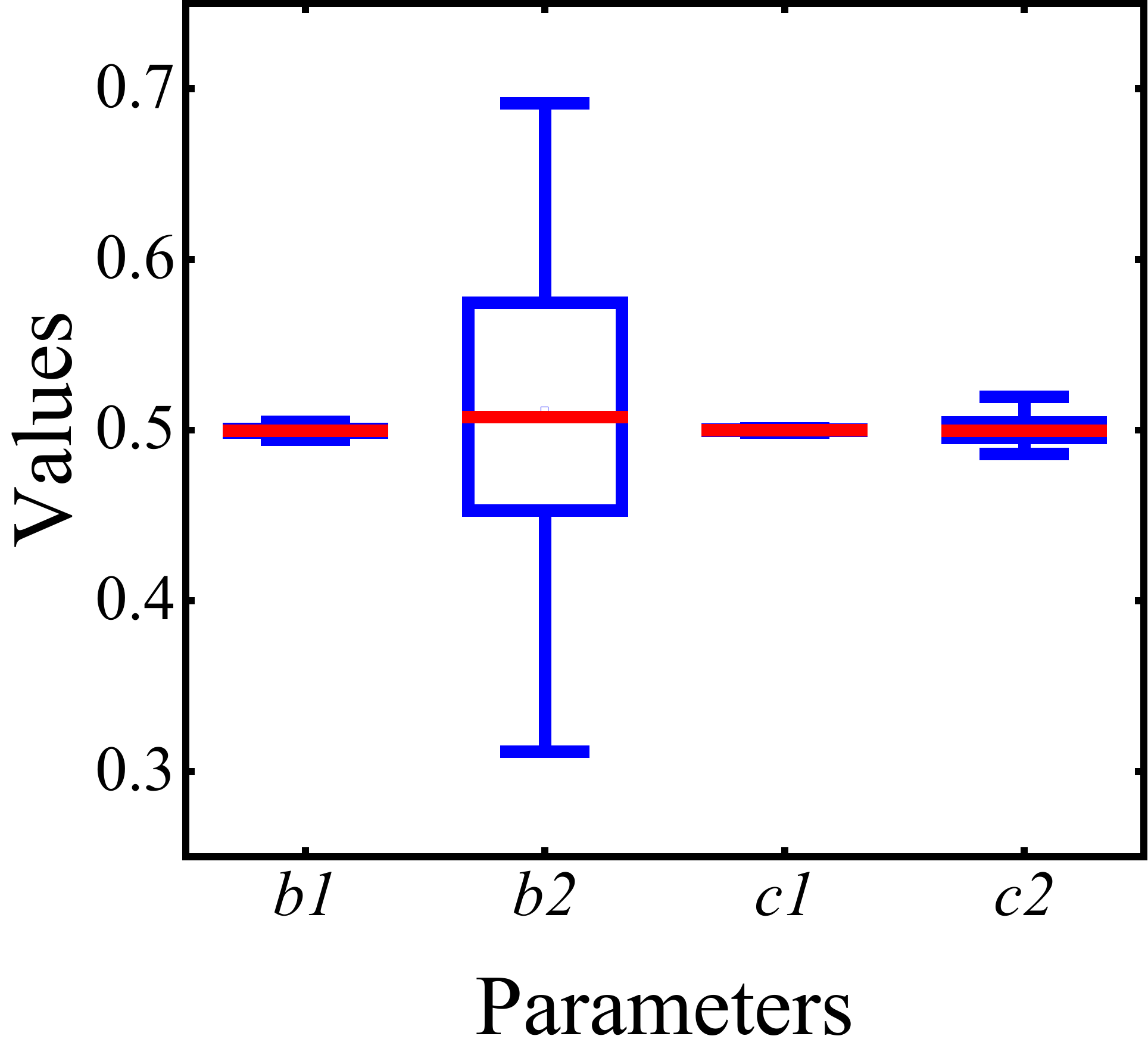}} \subfigure[PubMed]{\includegraphics[width=0.242\columnwidth]{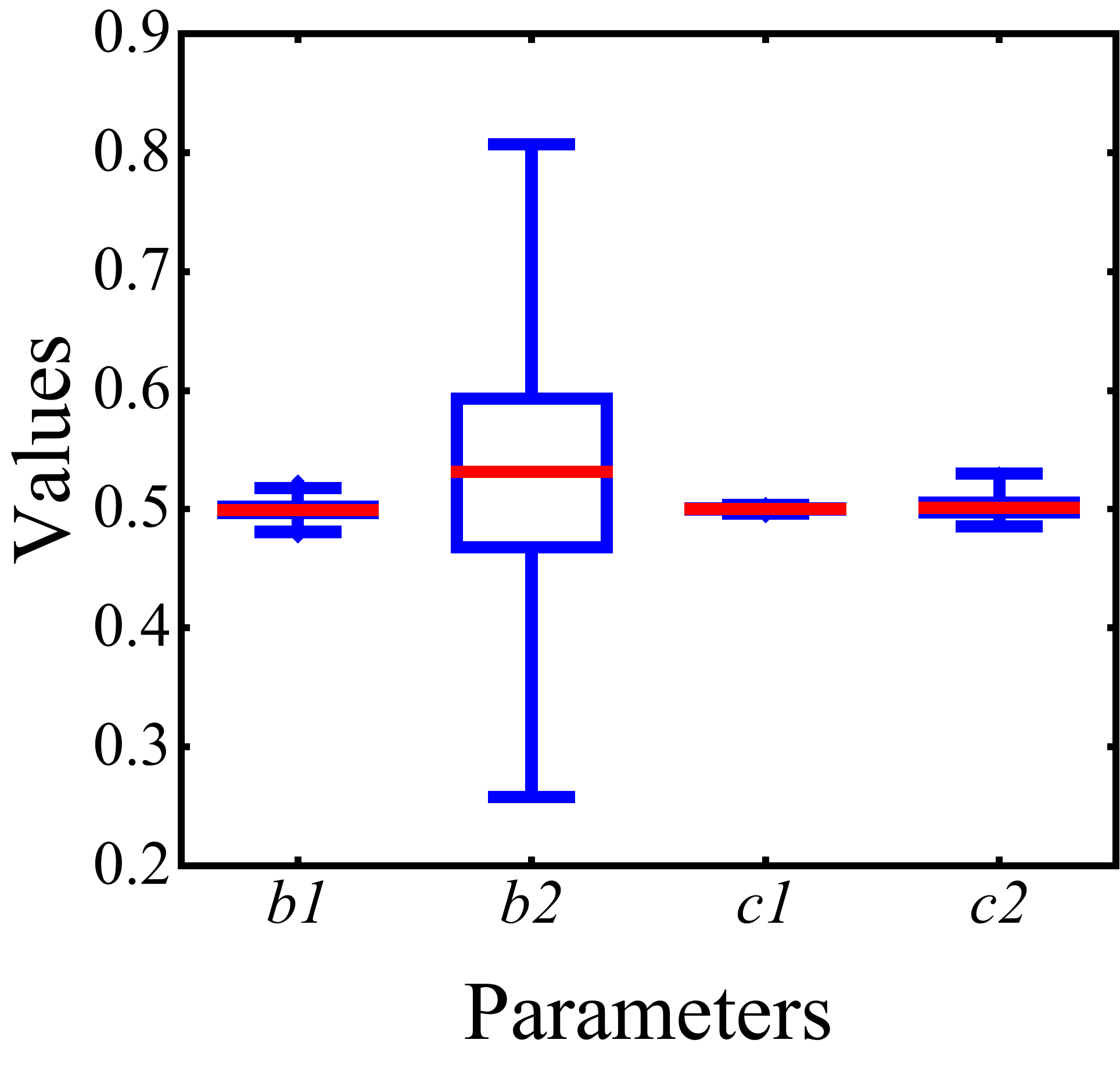}}
	\subfigure[Chameleon]{\includegraphics[width=0.242\columnwidth]{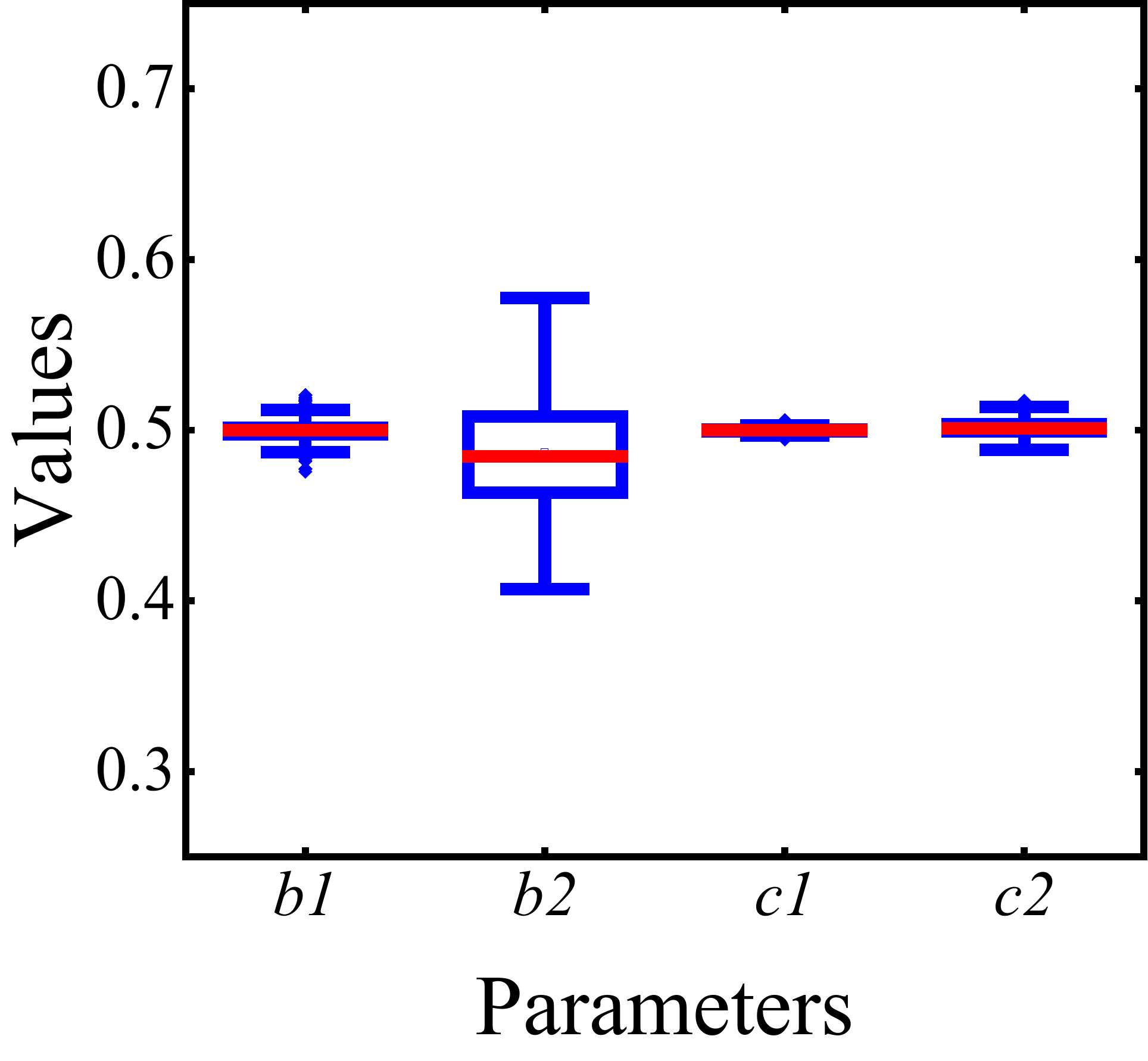}}
	\subfigure[Squirrel]{\includegraphics[width=0.242\columnwidth]{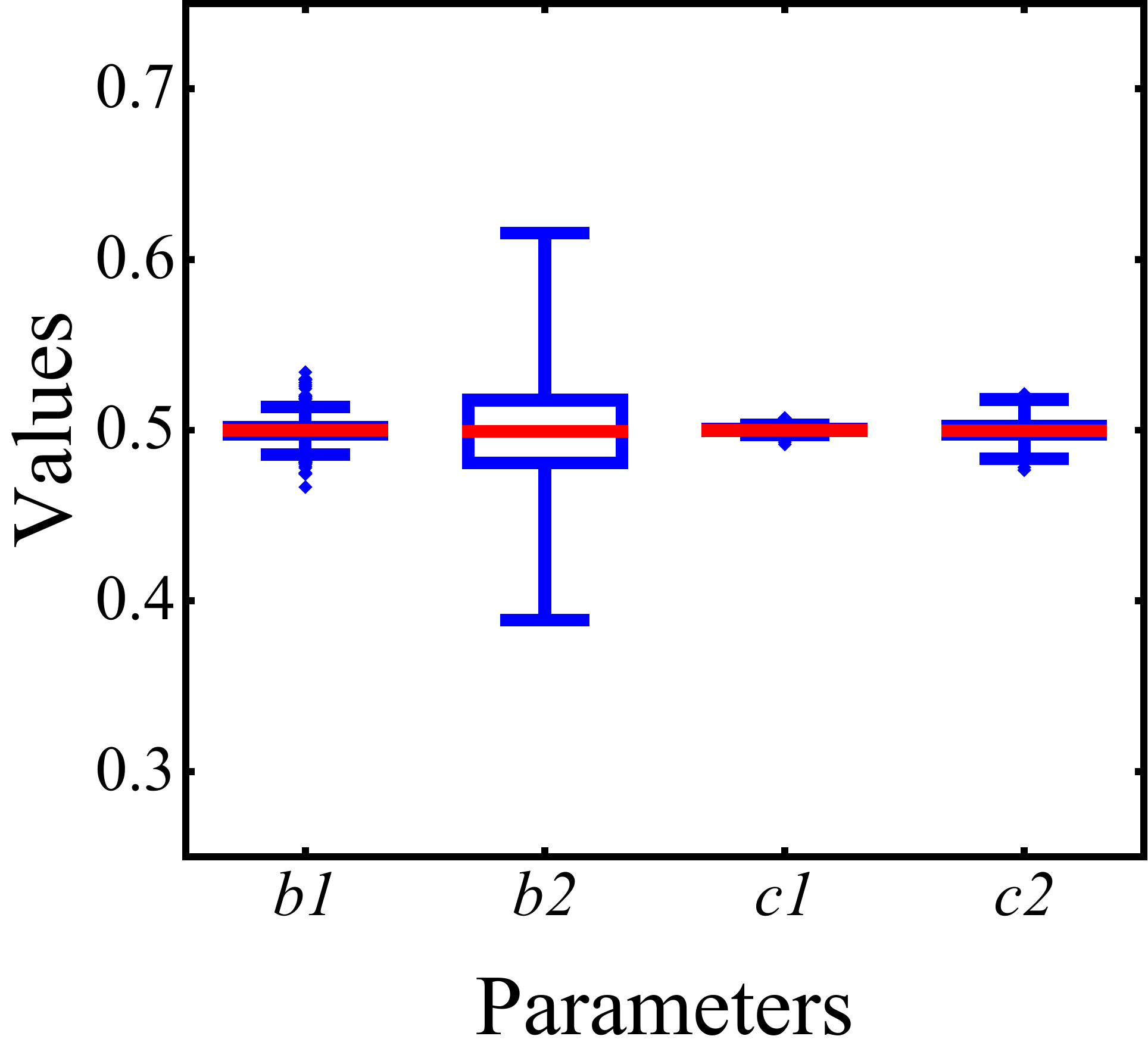}}
	\subfigure[CS]{\includegraphics[width=0.242\columnwidth]{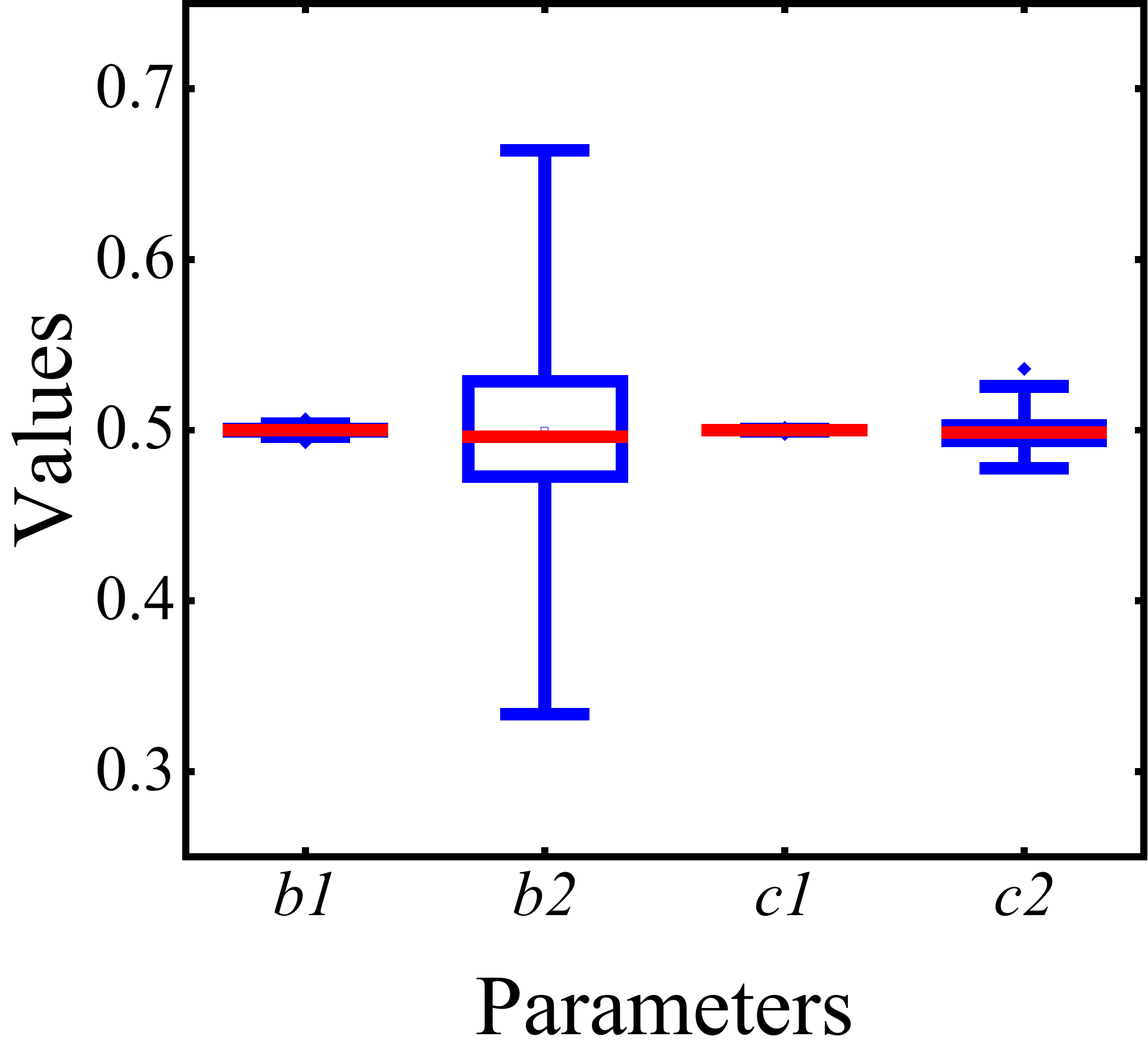}}
	\subfigure[Physics]{\includegraphics[width=0.242\columnwidth]{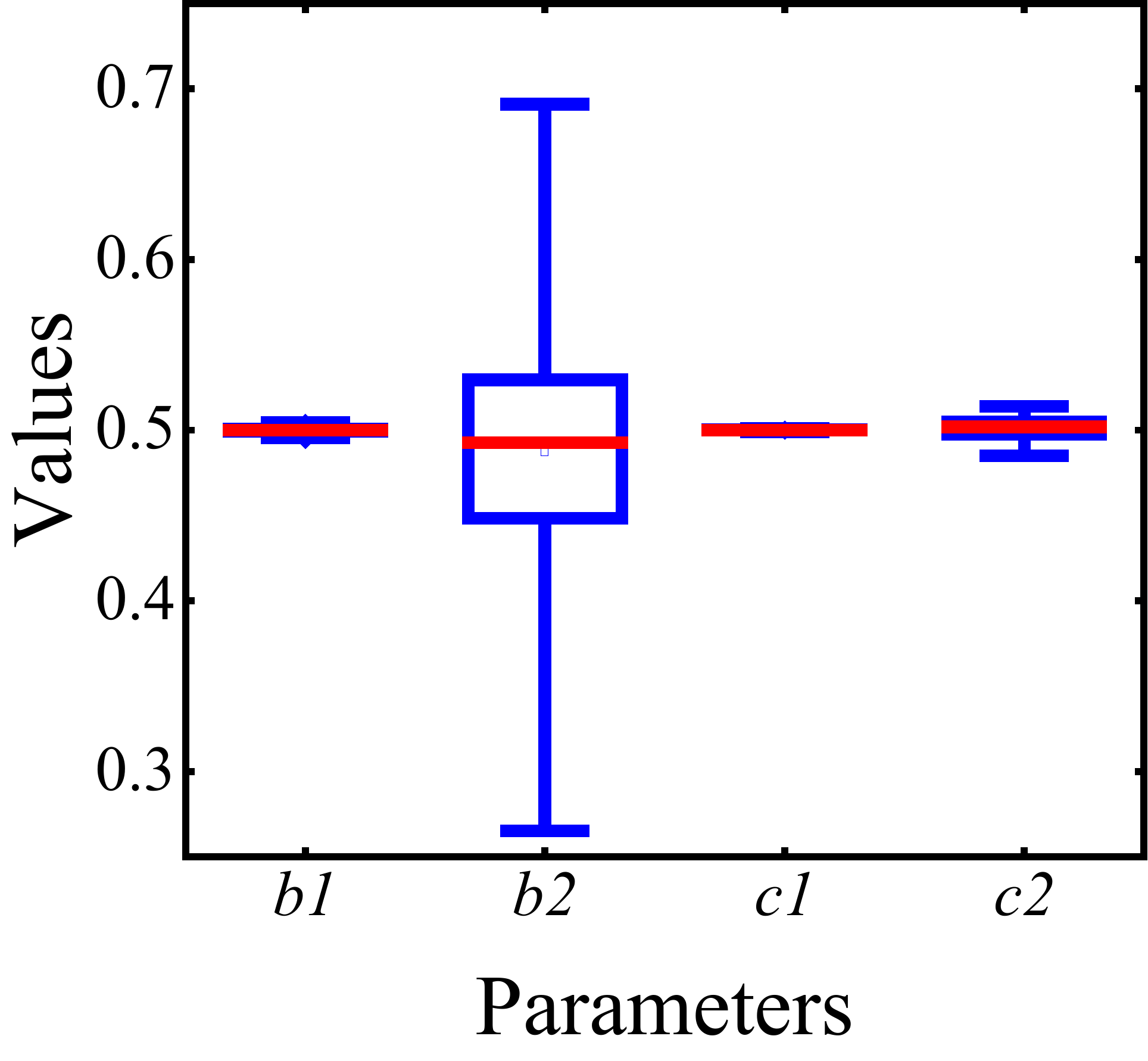}}
	\subfigure[20News]{\includegraphics[width=0.242\columnwidth]{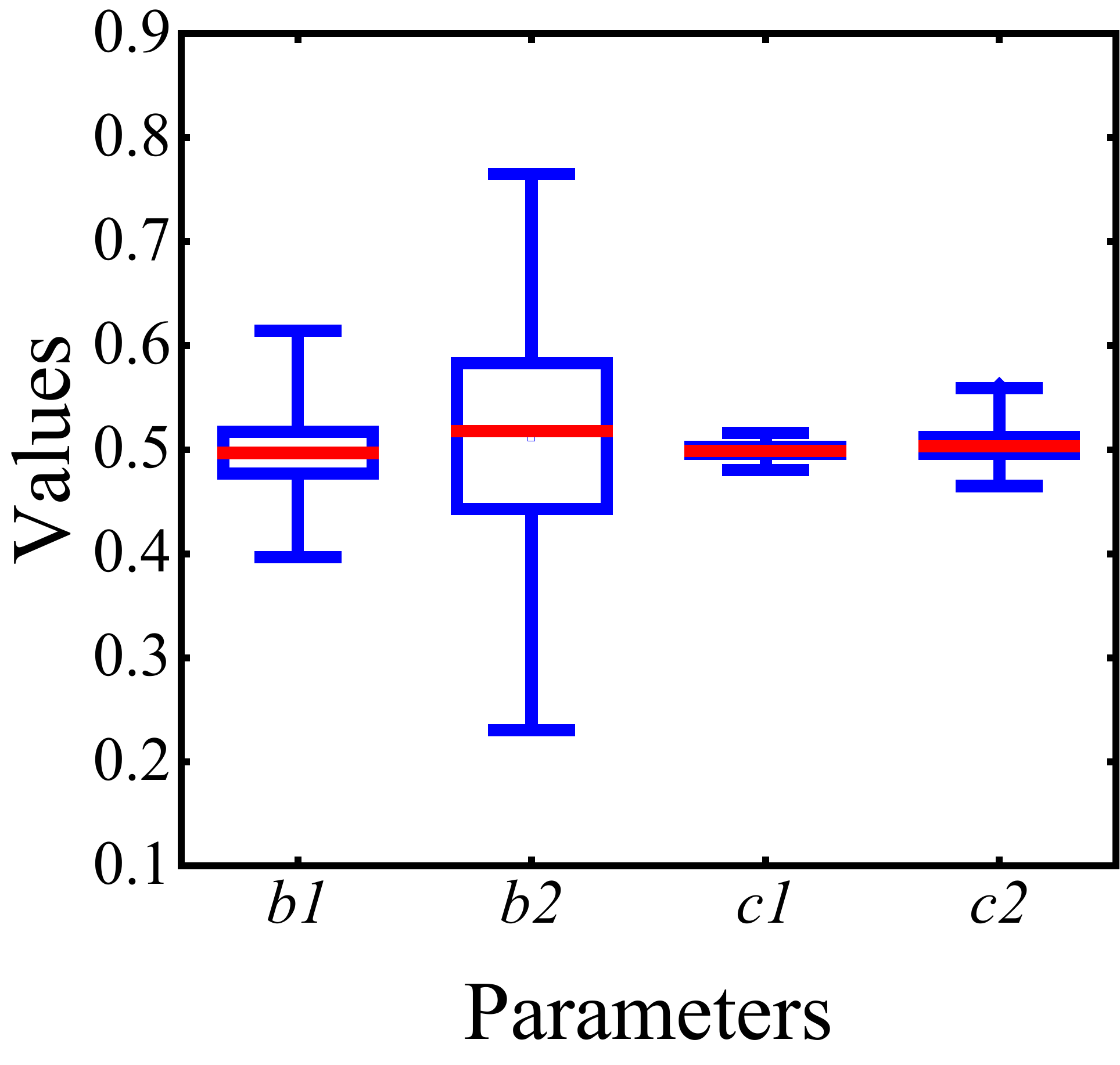}}
	\subfigure[Mini]{\includegraphics[width=0.242\columnwidth]{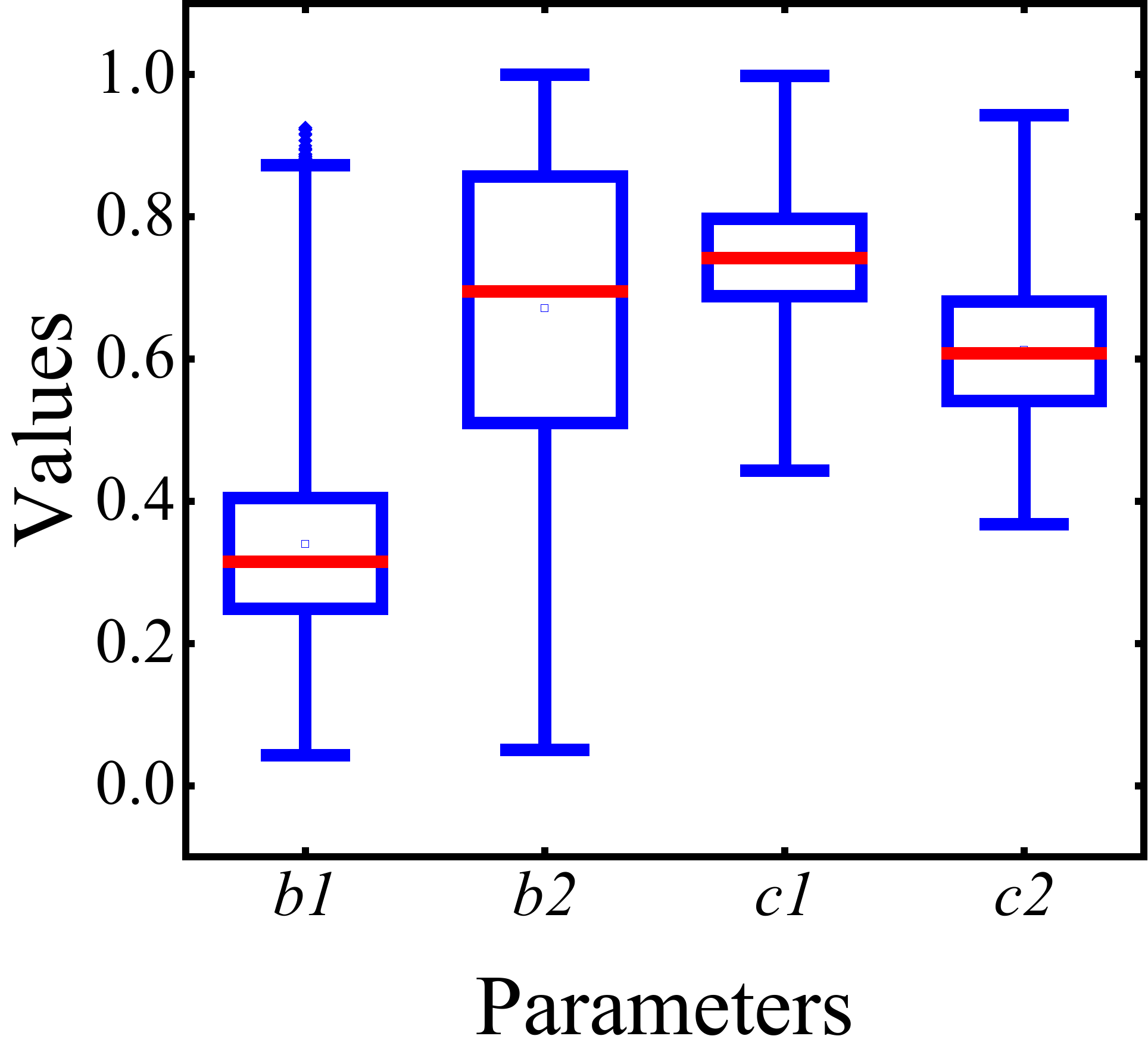}}
	\vspace{-0.5em}
	\caption{Analysis of Gaussian Parameter Distributions of Neural Gaussian Similarity modeling for all datasets.}
	\label{graph2}
	\vspace{-1em}
\end{figure}

\begin{figure}[t]
	\centering
	\subfigure[CiteSeer]{\includegraphics[width=0.242\columnwidth]{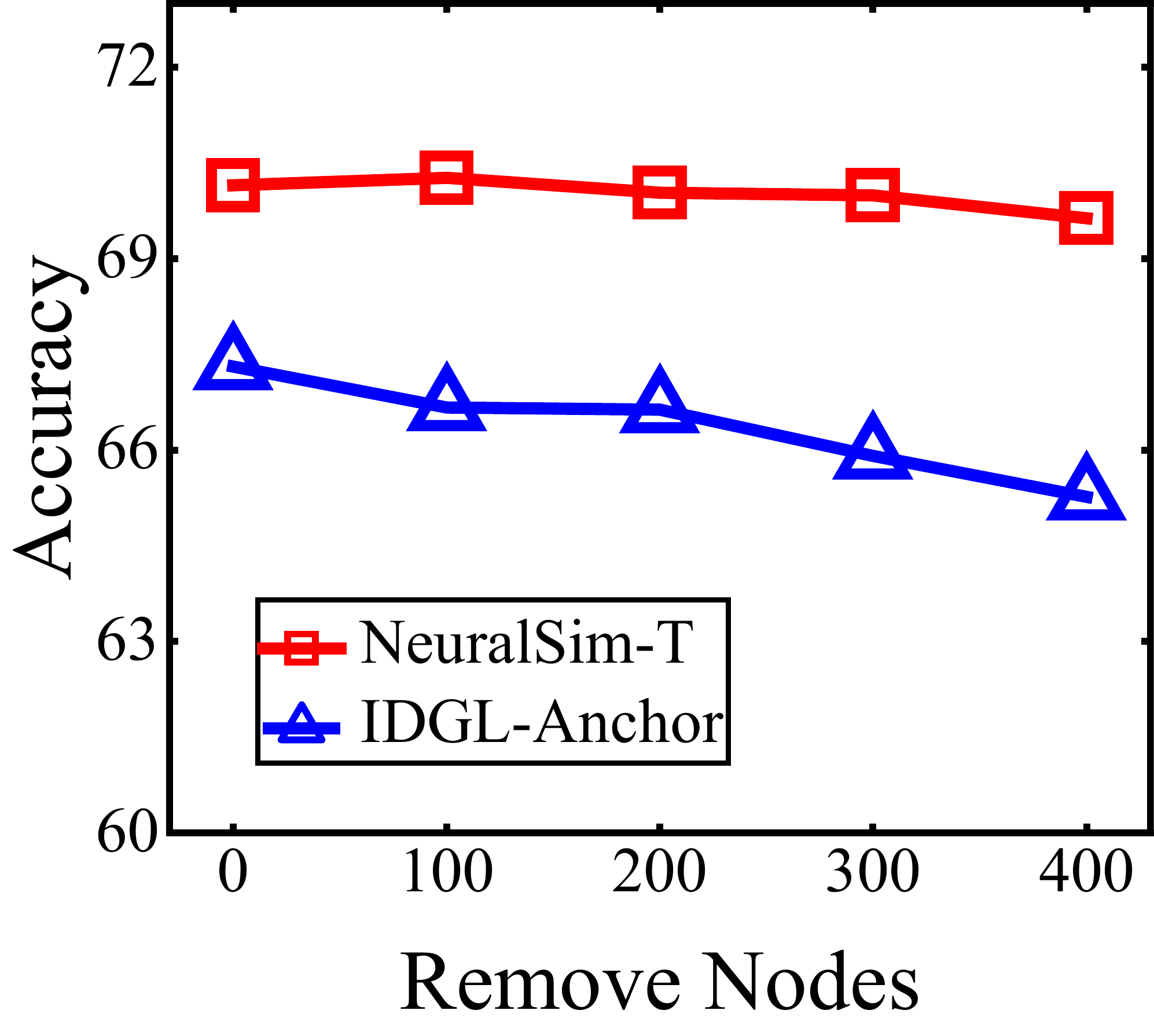}}
	\subfigure[PubMed]{\includegraphics[width=0.242\columnwidth]{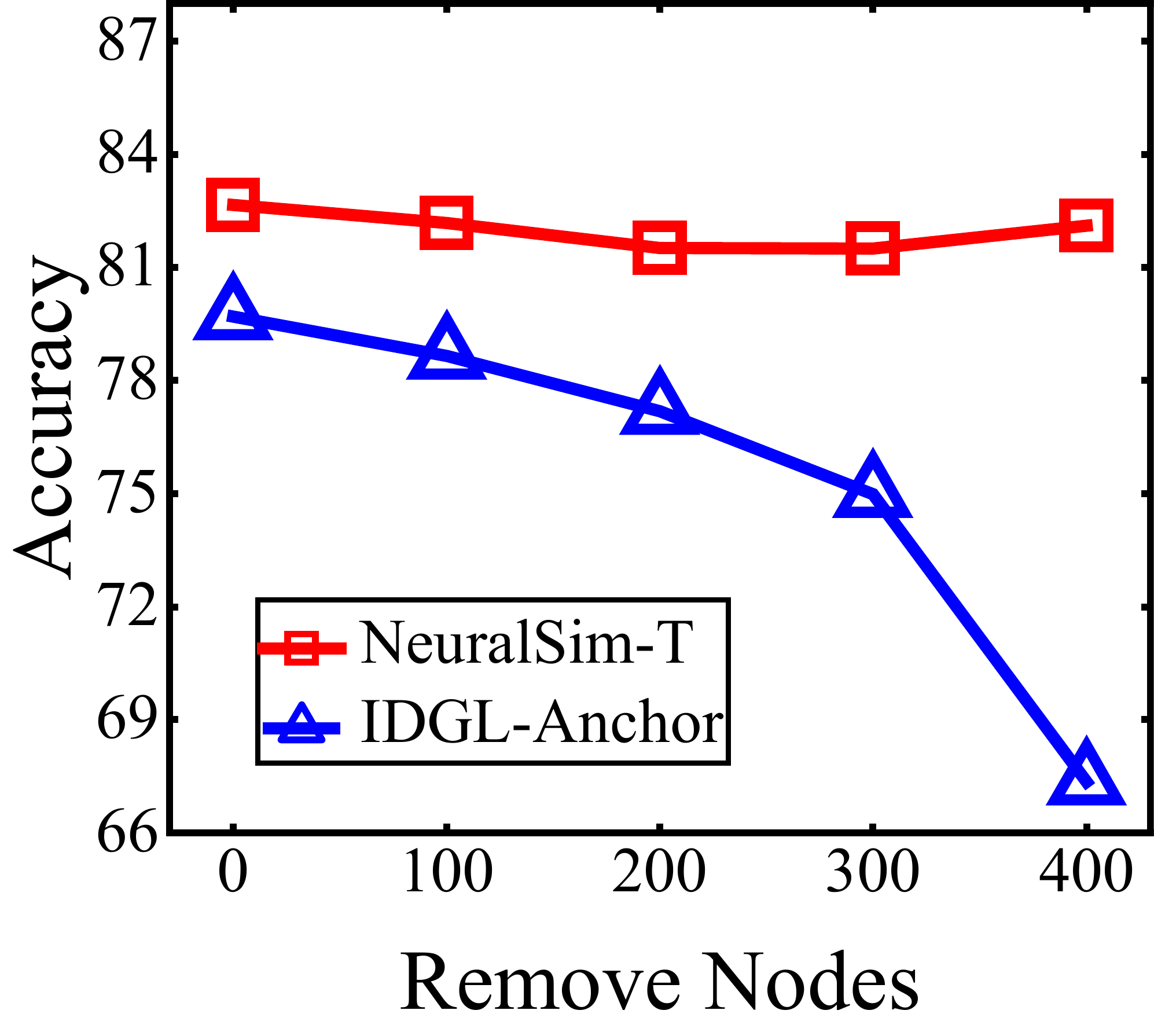}}
	\subfigure[Chameleon]{\includegraphics[width=0.242\columnwidth]{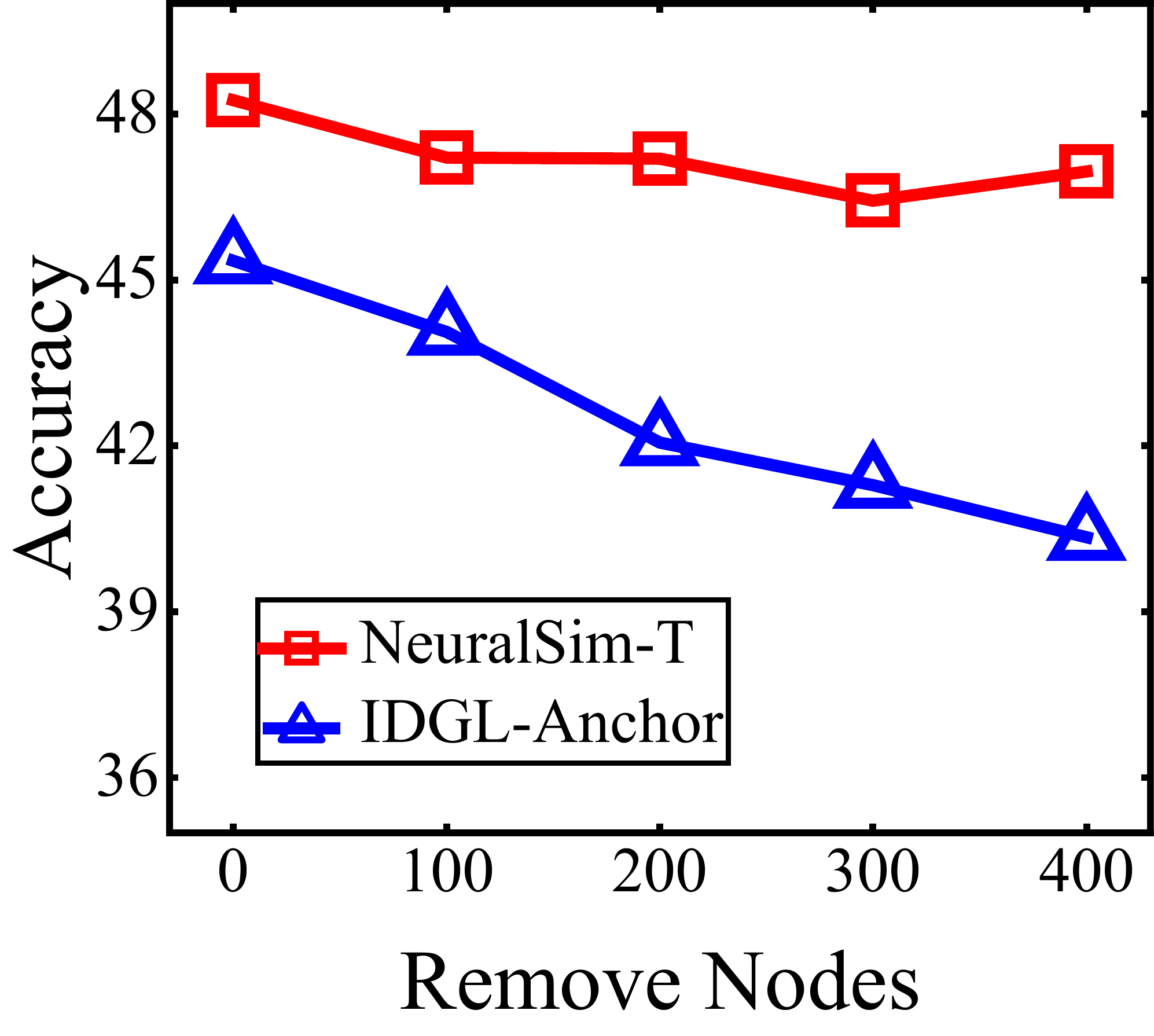}}
	\subfigure[Squirrel]{\includegraphics[width=0.242\columnwidth]{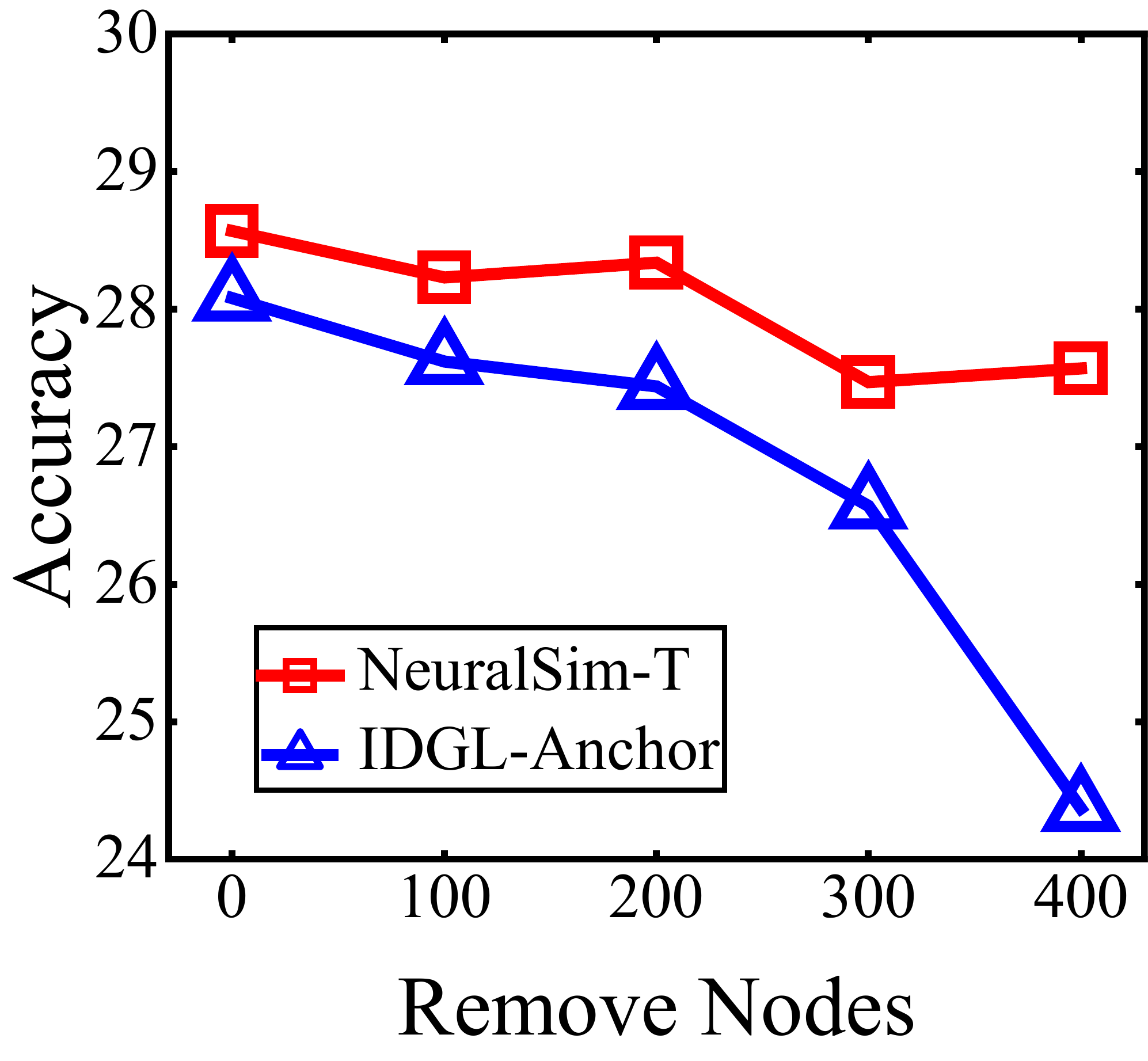}}
	\subfigure[CS]{\includegraphics[width=0.242\columnwidth]{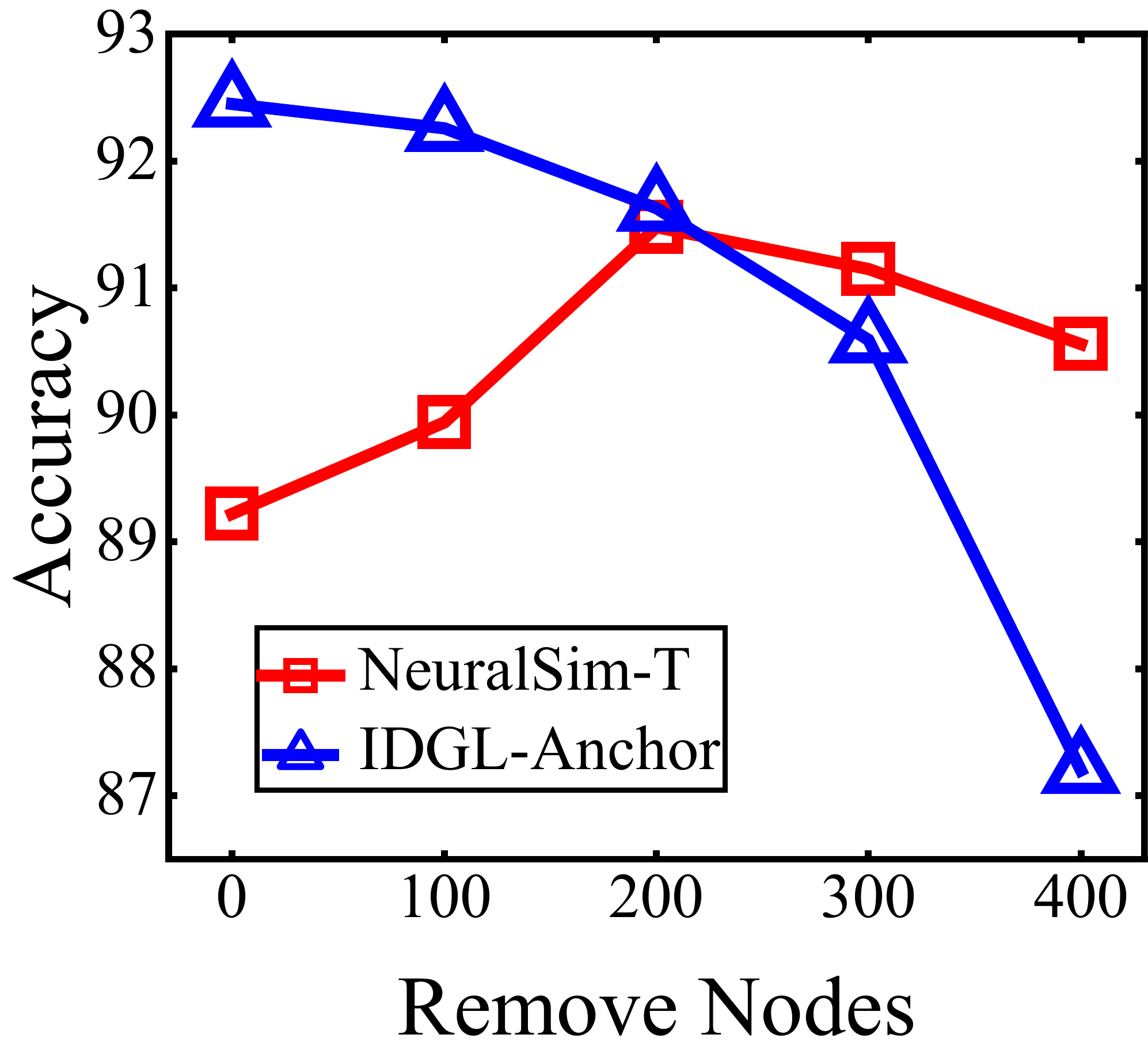}}
	\subfigure[Physics]{\includegraphics[width=0.242\columnwidth]{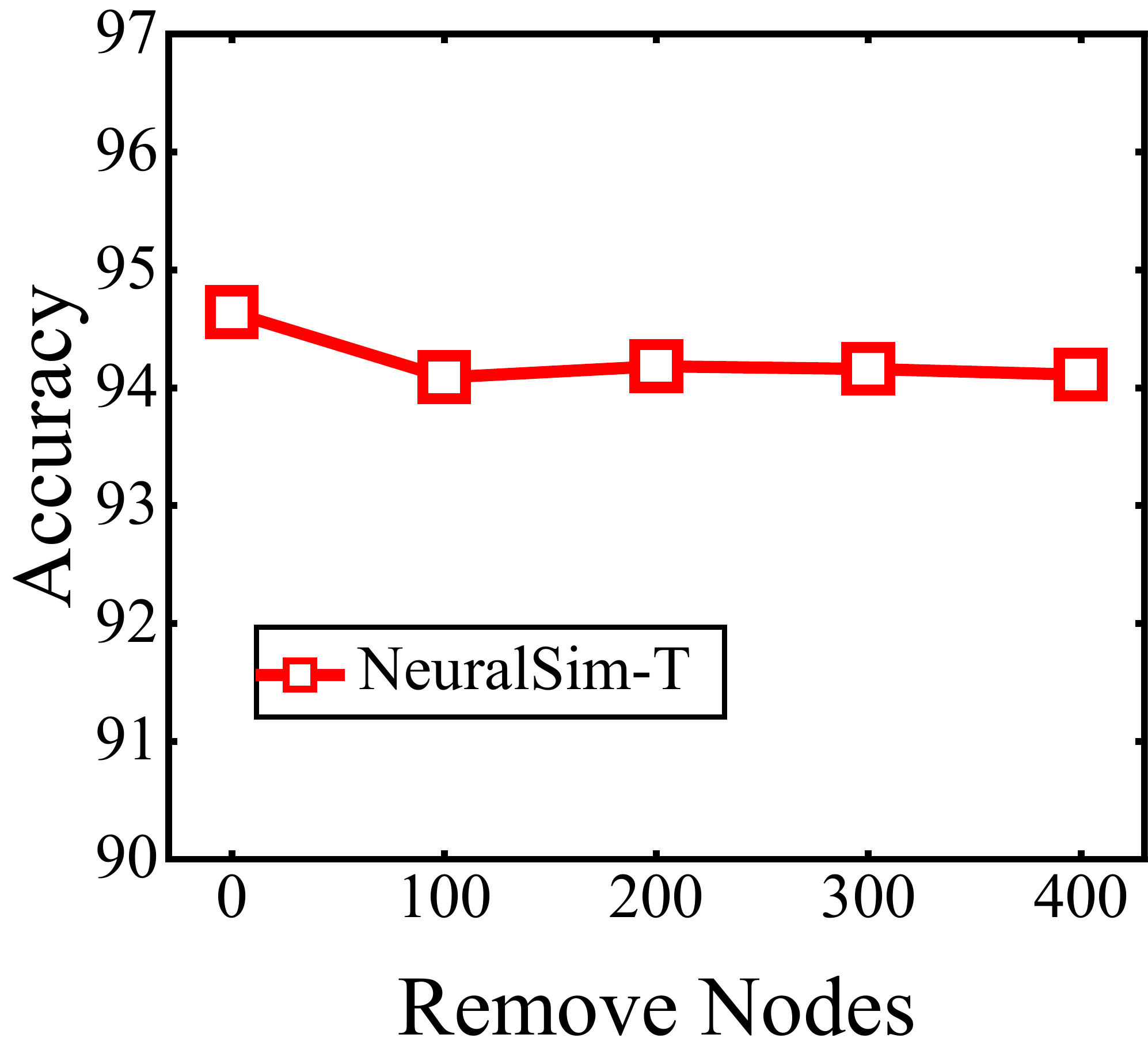}}
	\subfigure[20News]{\includegraphics[width=0.242\columnwidth]{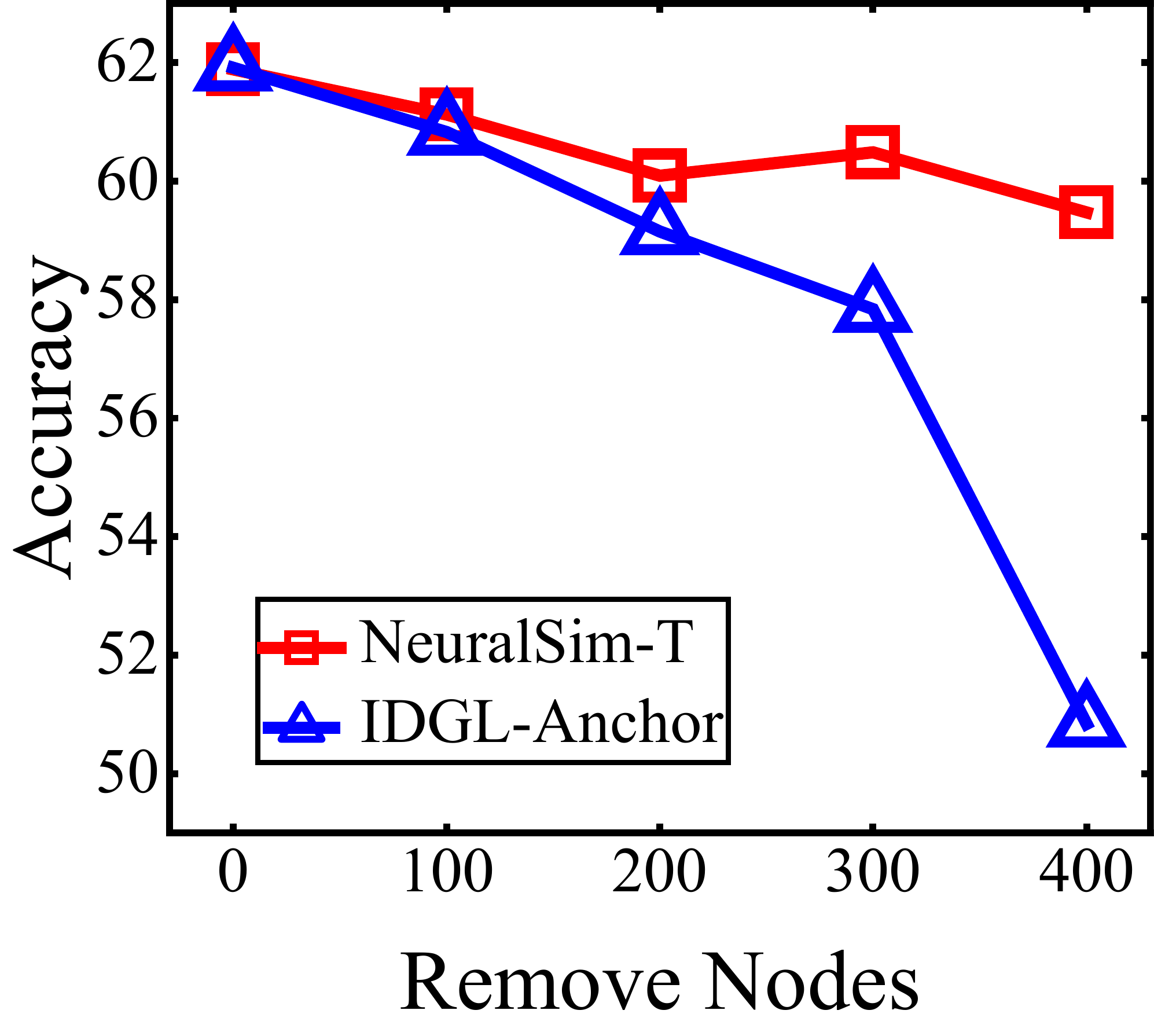}}
	\subfigure[Mini]{\includegraphics[width=0.242\columnwidth]{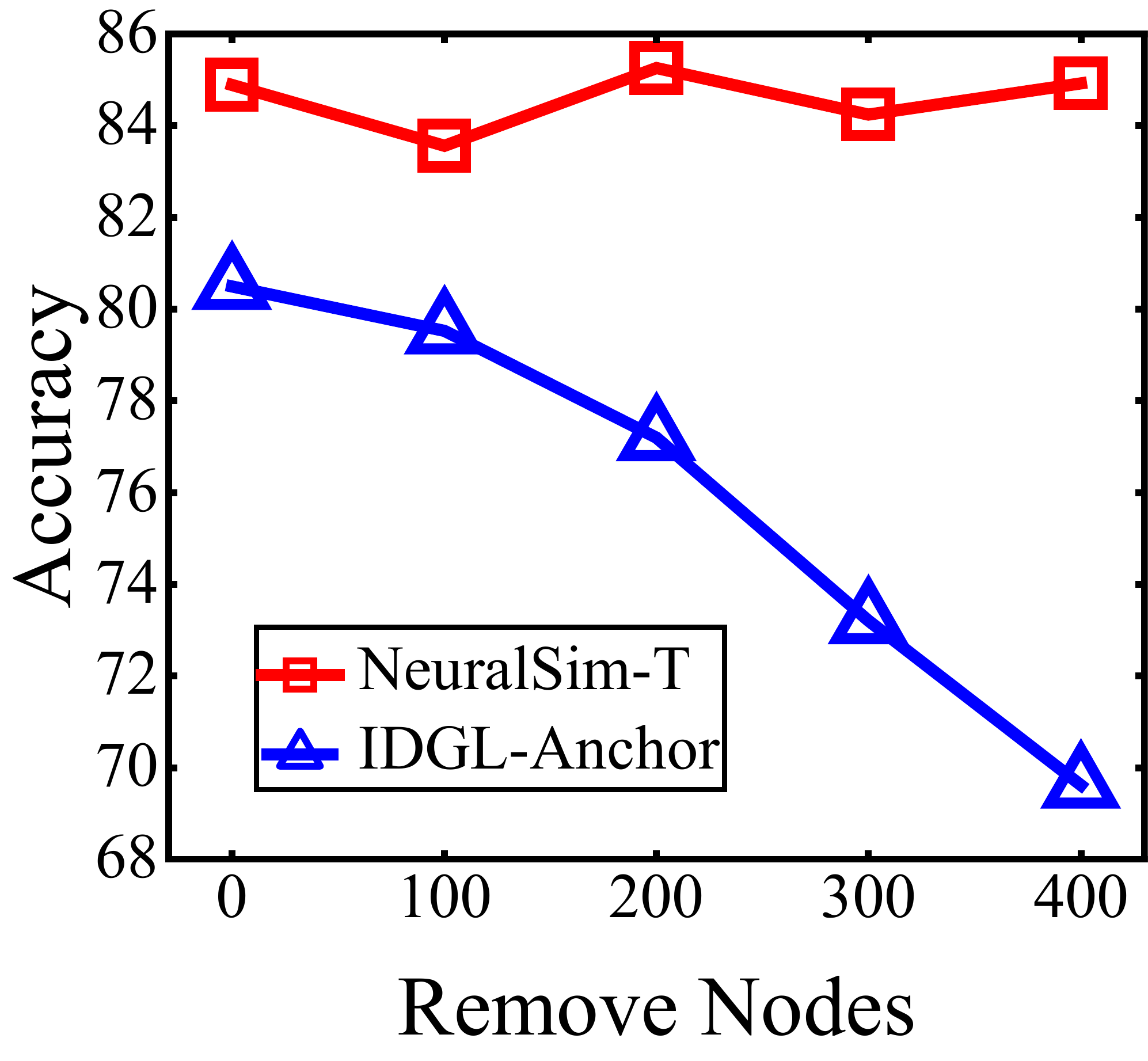}}
	\vspace{-0.5em}
	\caption{Analysis of removing transition-graph and anchor-graph nodes for NeuralGauSim-T and IDGL-Anchor, respectively.}
	\label{graph1}
	\vspace{-1em}
\end{figure}

\subsection{Results Comparison of Different Edge Samples}
To evaluate the performance of different number of sampling edges $\mathcal{K}$, we perform comparative experiments by setting $\mathcal{K} \in \{5, 10, 15, 20\}$ for full-graph and transition-graph settings. Experimental results are reported in Table \ref{tab2} and Table \ref{tab3}, respectively. From these two tables, we have the following observations. First, compared with the Linear Similarity (LinSim), the proposed Gaussian Similarity (GauSim) and Neural Gaussian Similarity (NeuralGauSim) methods consistently achieve better performance under different number of edge samples, indicating the effectiveness of the proposed Gaussian similarity modeling. Second, by comparing Gaussian Similarity, i.e., GauSim and GauSim-T, and Neural Gaussian Similarity, i.e., NeuralGauSim and NeuralGauSim-T, we can find that the NeuralGauSim method exhibits superior performance except when $\mathcal{K} = 10$ on 20News dataset, demonstrating the effectiveness of the learnable parameters of the Gaussian similarity. Third, by comparing performance of different number of edge samples, we can observe that as the parameter $\mathcal{K}$ increases, the performance of Linear Similarity (LinSim) deteriorates. In contrast, the proposed Gaussian Similarity (GauSim) and Neural Gaussian Similarity (NeuralGauSim) methods consistently maintain its performance level, further substantiating the effectiveness of the proposed methods.   

\subsection{Analysis of Gaussian Parameter Distributions}
To investigate whether the parameter values of Gaussian similarity learned by the proposed similarity modeling method are meaningful, we conduct the Gaussian parameter distribution analysis experiments on all datasets. Experimental results are shown in Figure \ref{graph2}. Specifically, we use the boxplot to count the distribution of parameters $b$ and $c$ of the two-layer model, where the parameters of the first layer are denoted as $b_1$, $c_1$, and the second layer are denoted as $b_2$, $c_2$. For clarity, we multiply all parameters $c$ by a scale factor of $10$. From these figures, we have the following observations. First, the adjustment range of the parameters of the second layer is larger than that of the first layer. Second, for all datasets, the parameter $b_2$ exhibits a wider distribution compared to other parameters, implying that adjusting the peak of the bell-shaped Gaussian similarity function is more important than adjusting the slope of the similarity function.

\subsection{Sensitivity of Transition Graph Nodes}

\begin{figure}[t]
	\centering
	\subfigure[CiteSeer]{\includegraphics[width=0.242\columnwidth]{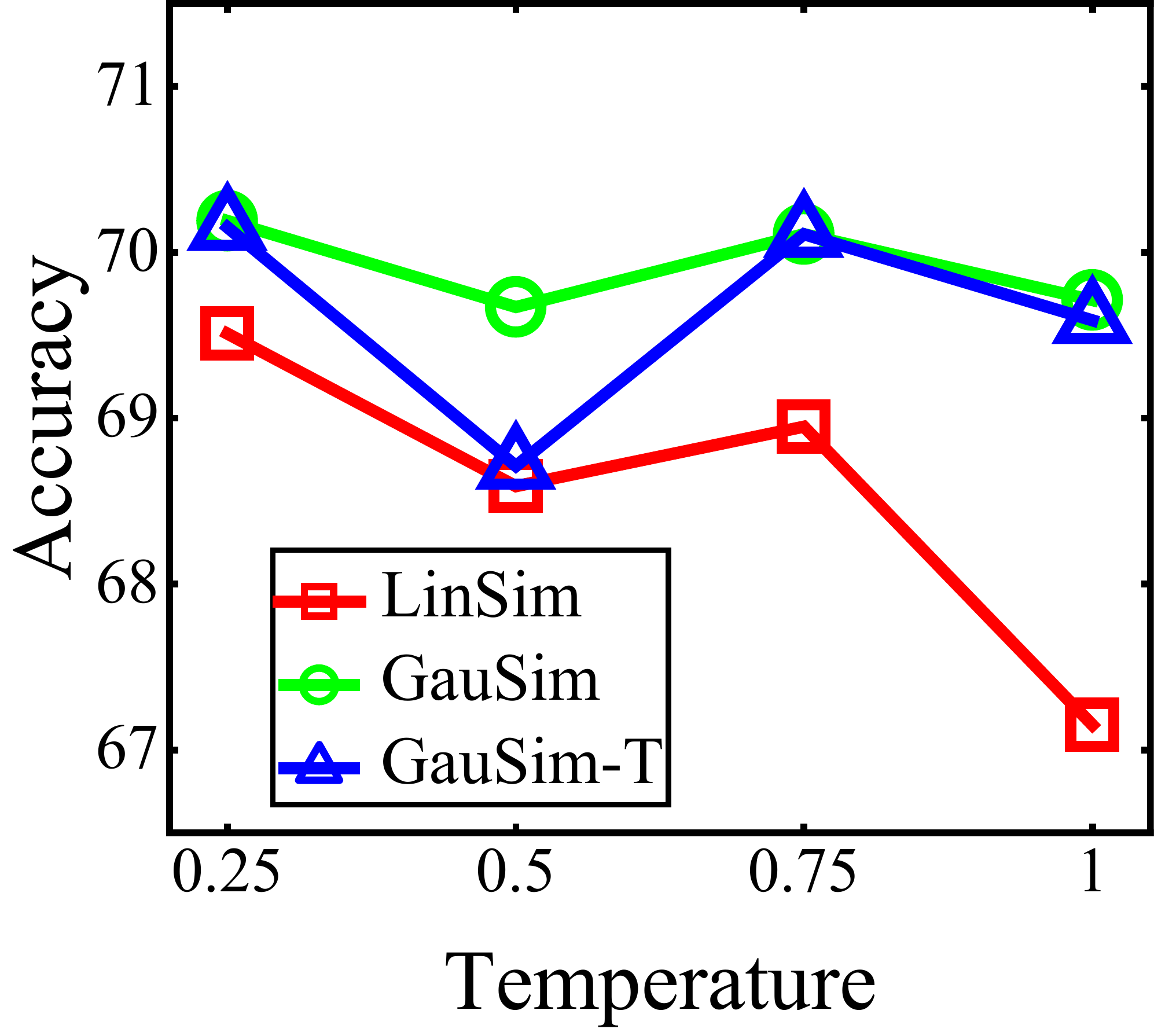}}
	\subfigure[PubMed]{\includegraphics[width=0.242\columnwidth]{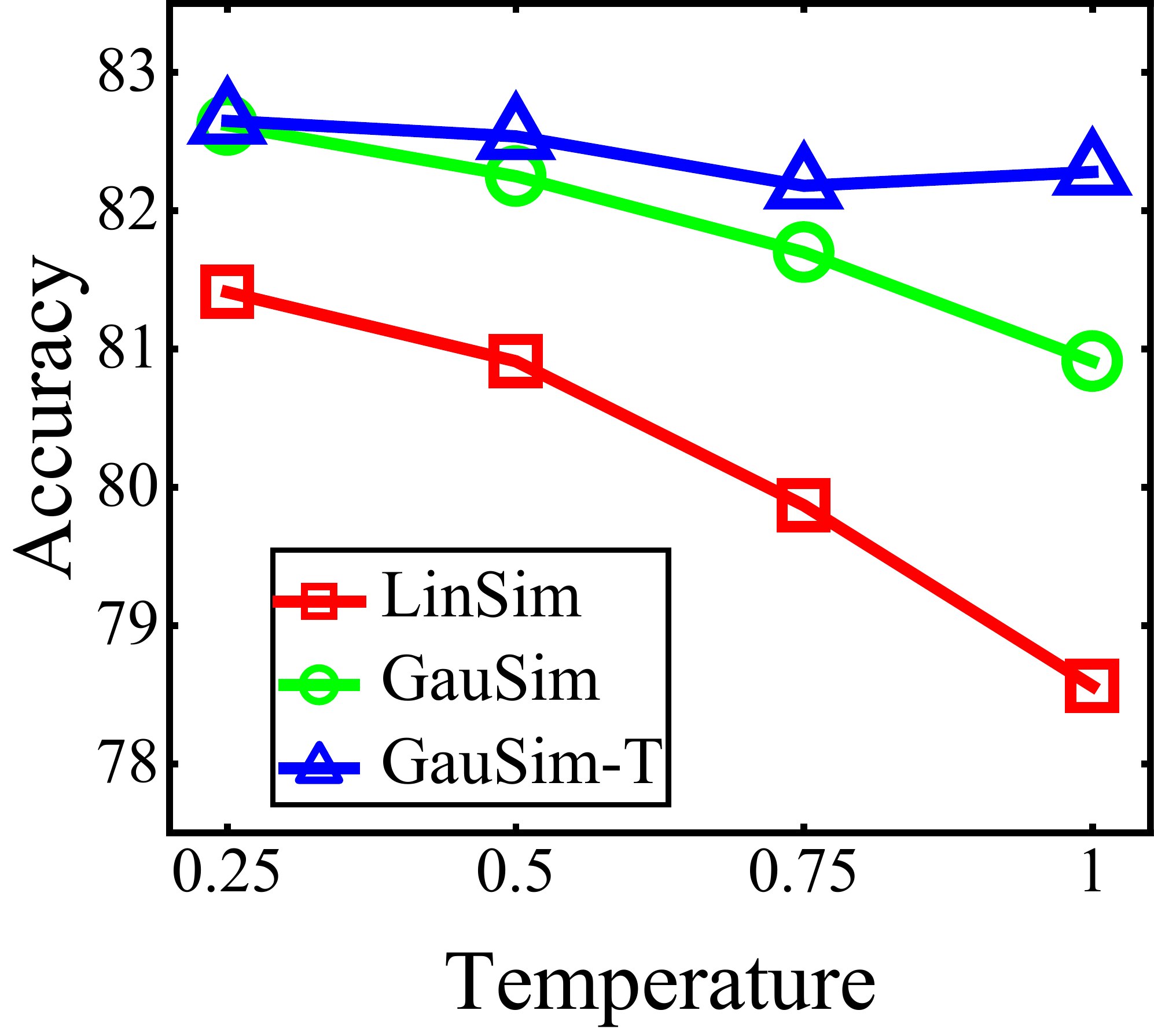}}
	\subfigure[Chameleon]{\includegraphics[width=0.242\columnwidth]{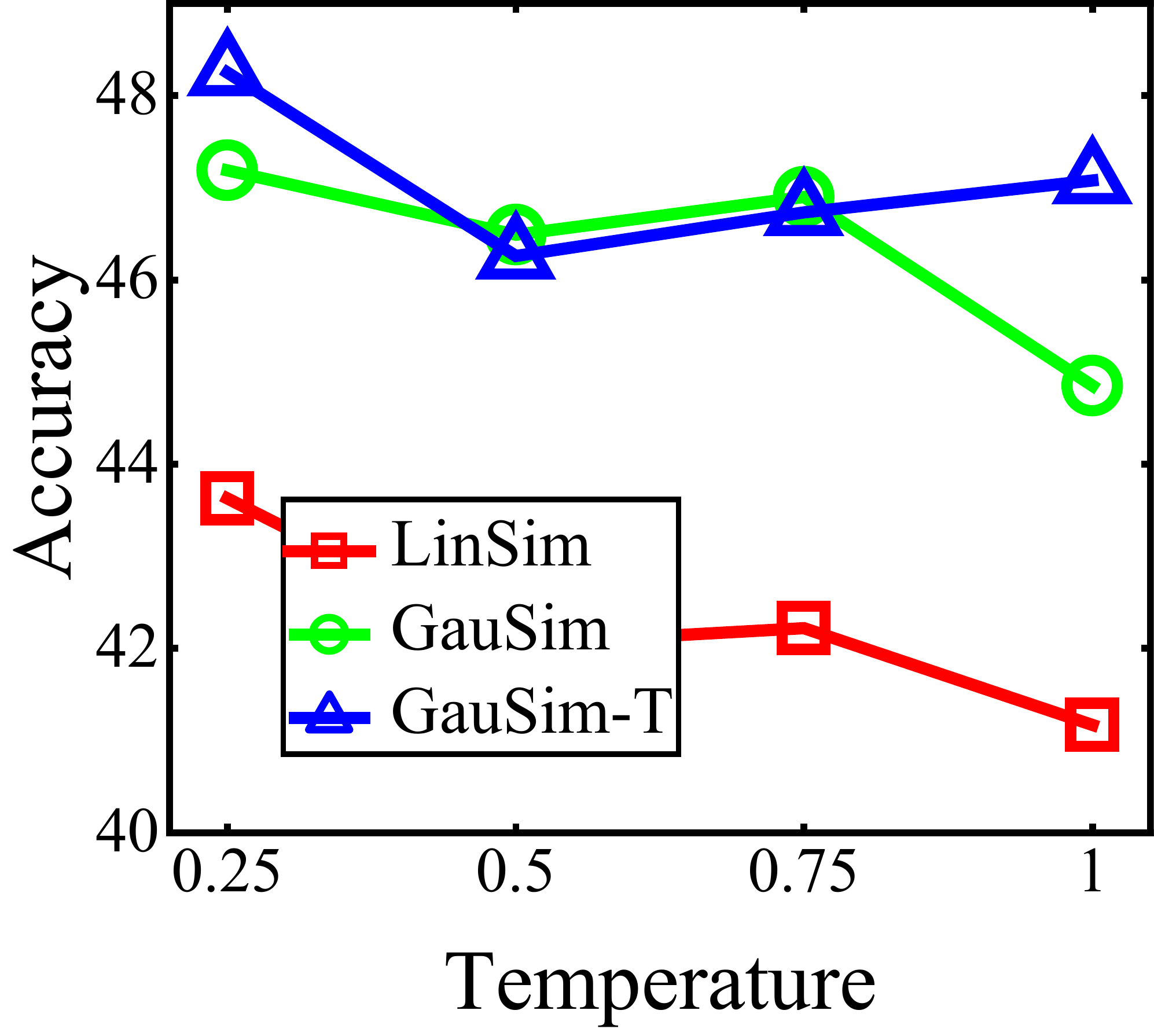}}
	\subfigure[Squirrel]{\includegraphics[width=0.242\columnwidth]{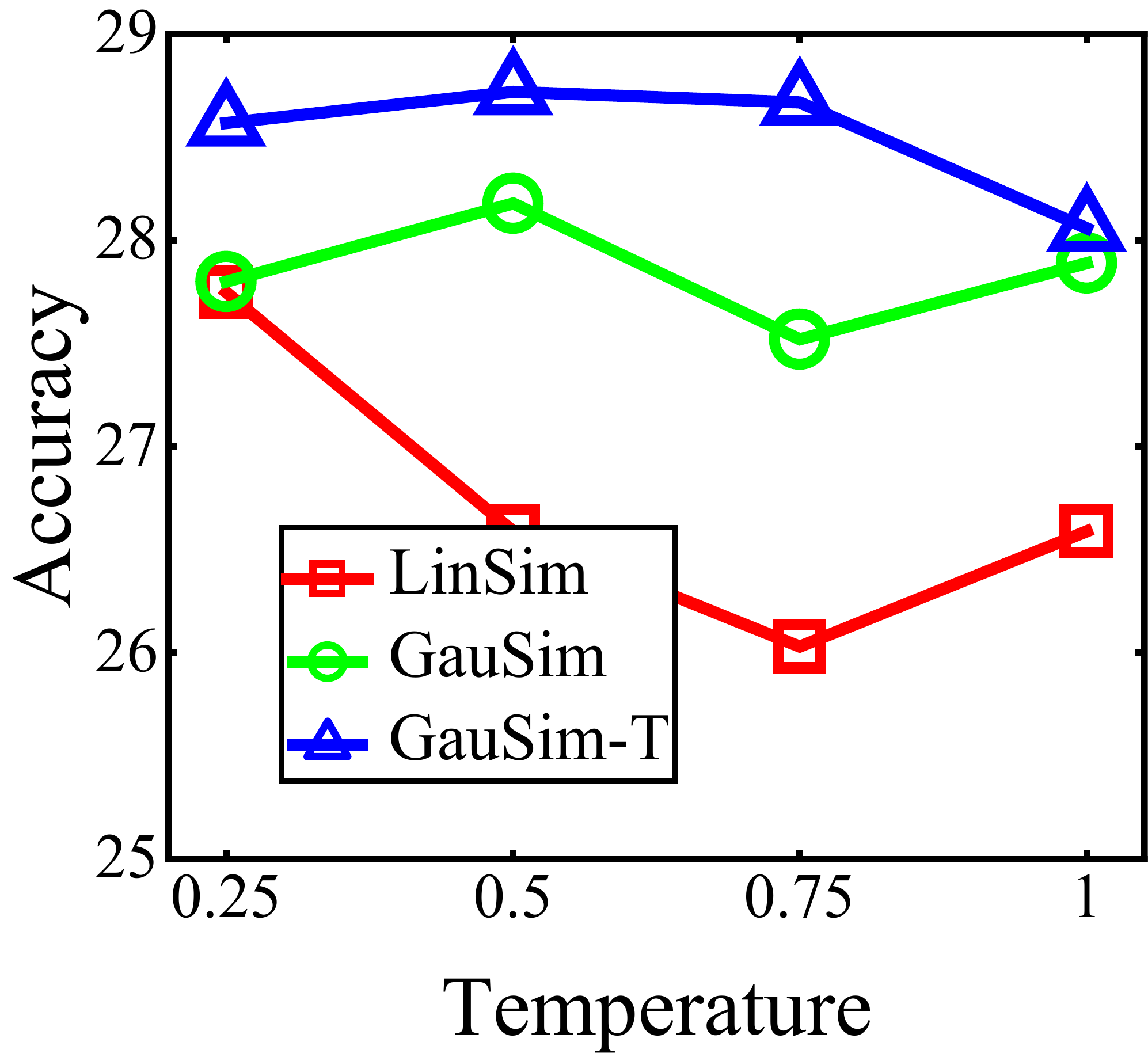}}
	\subfigure[CS]{\includegraphics[width=0.242\columnwidth]{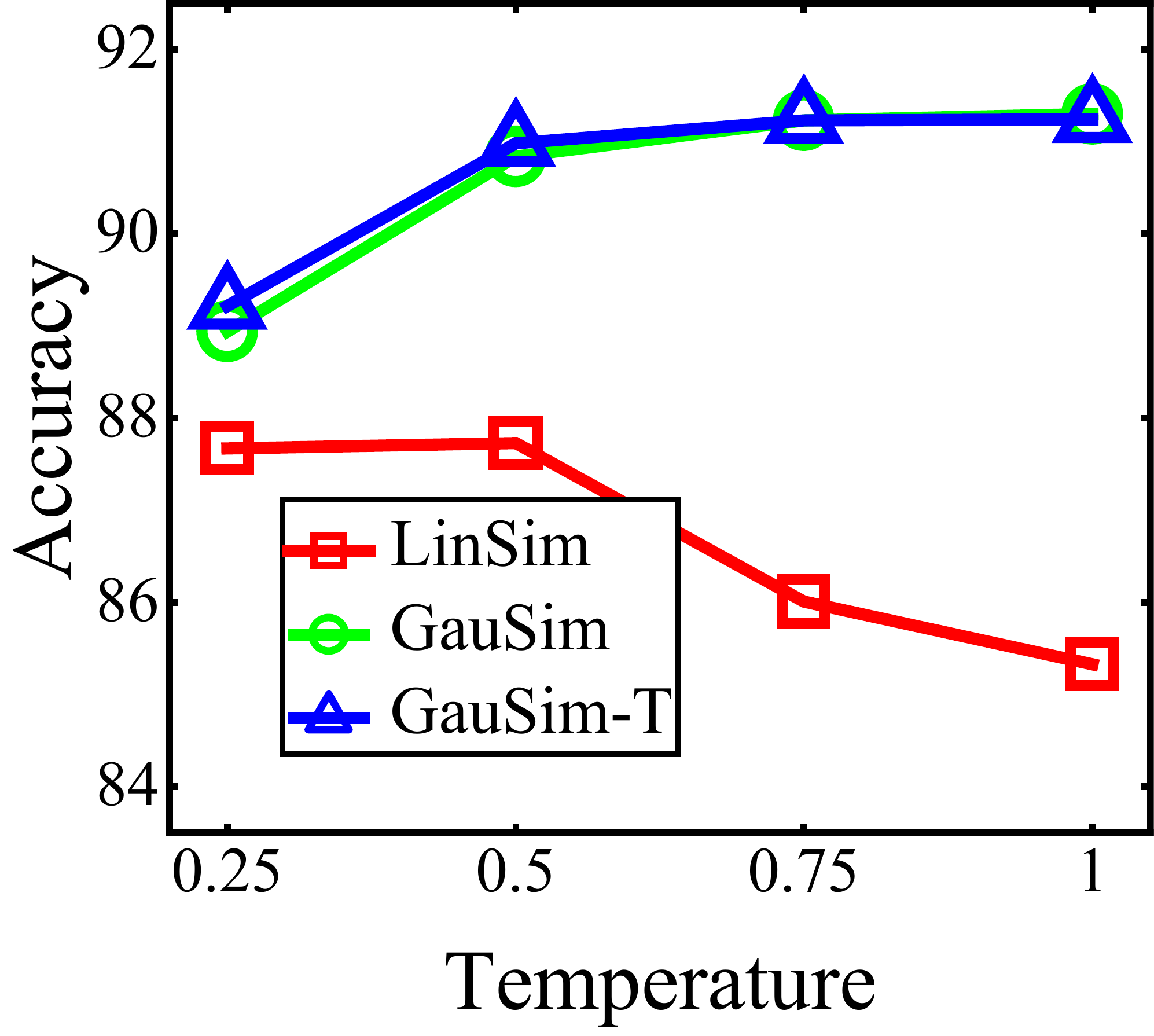}}
	\subfigure[Physics]{\includegraphics[width=0.242\columnwidth]{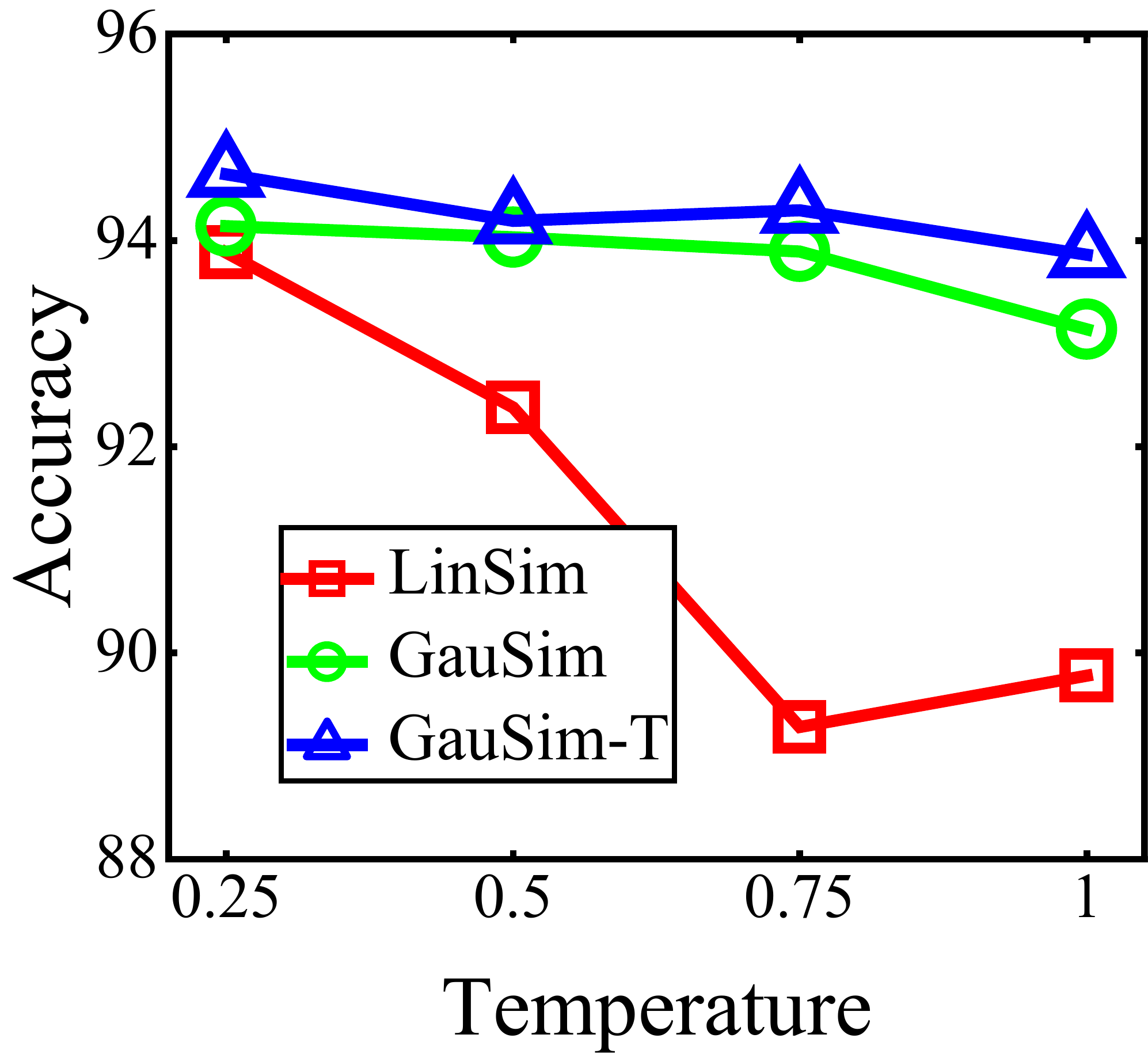}}
	\subfigure[20News]{\includegraphics[width=0.242\columnwidth]{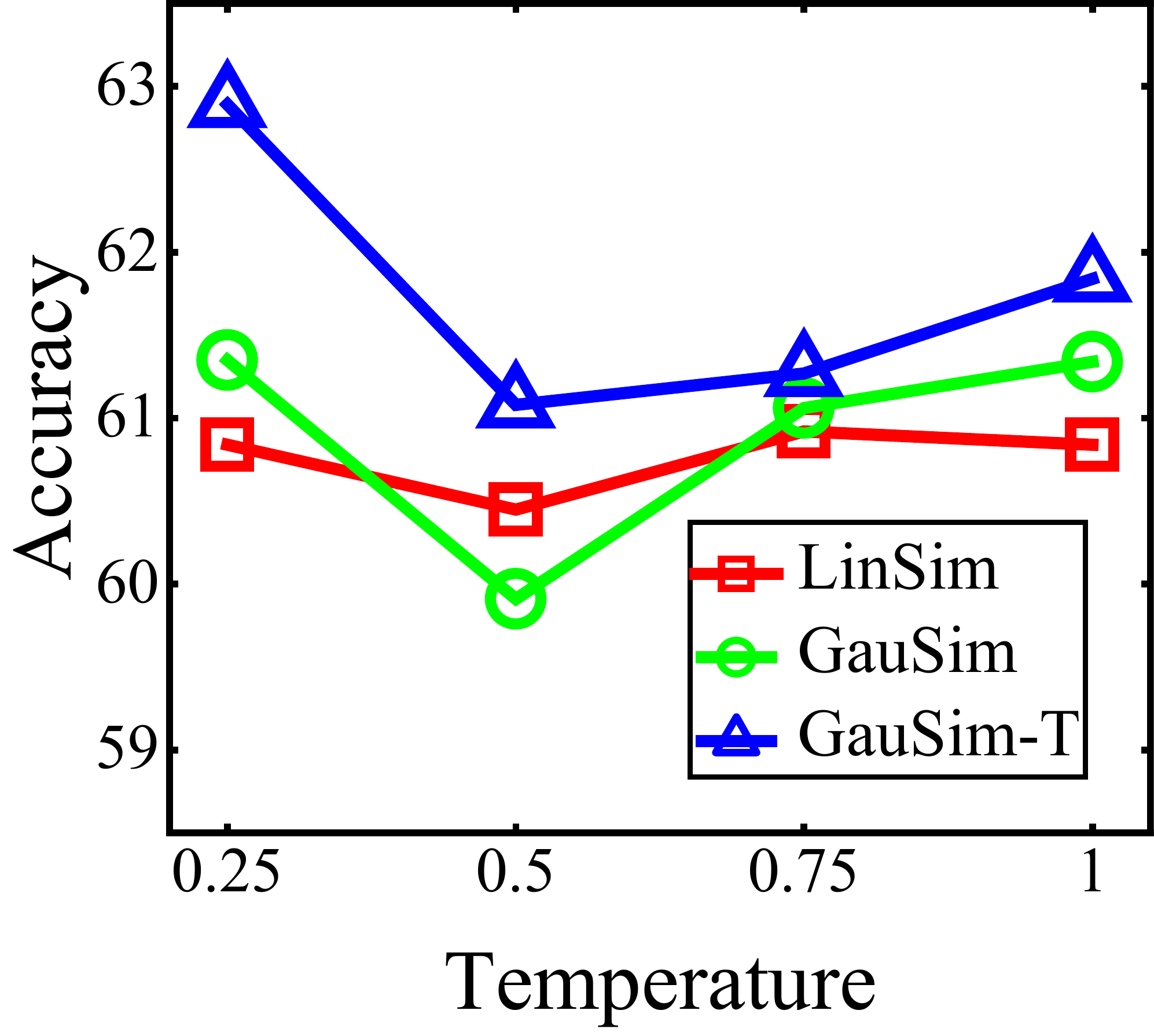}}
	\subfigure[Mini]{\includegraphics[width=0.242\columnwidth]{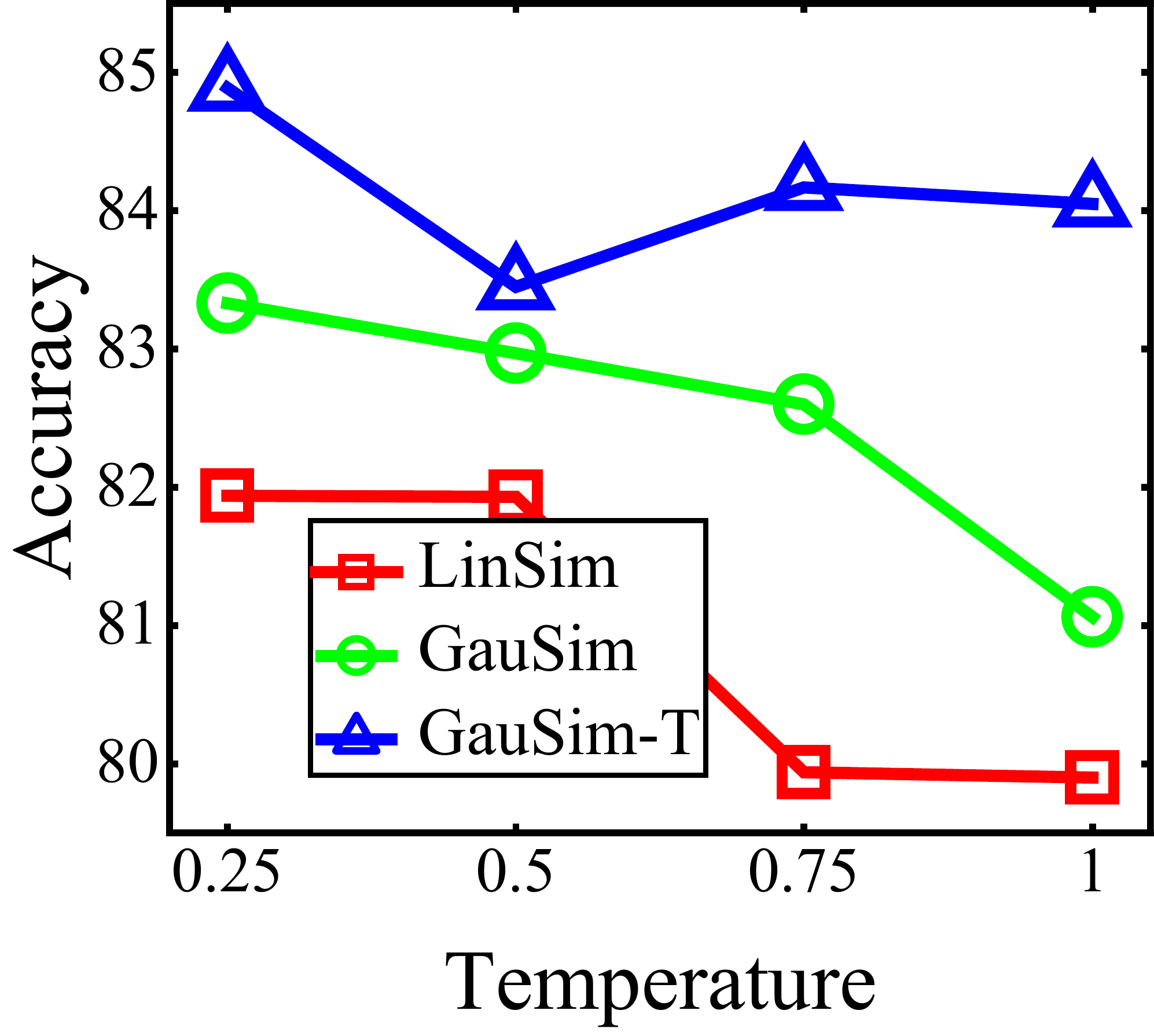}}
	\vspace{-1em}
	\caption{Impact of Temperature Parameter $\tau$.}
	\label{graph4}
\end{figure}

\begin{table}[t]
	\centering
	\setlength{\tabcolsep}{9mm}
	\scalebox{0.6}{
		\begin{tabular}{|c|c|c|}
			\toprule
			&Training Time (s)&GPU Memory (MB)\\
			\midrule
			\midrule
			IDGL&0.7967&4,358\\
			IDGL-Anchor&0.1249&2,986\\
			NodeFormer&\textbf{0.0463}&\textbf{1,248}\\
			LinSim&0.1145&4,486\\
			LinSim-T&\textbf{0.0367}&\textbf{1,796}\\
			GauSim&0.1193&4,618\\
			GauSim-T&\textbf{0.0382}&\textbf{1,816}\\
			NeuralGauSim&0.1238&4,664\\
			NeuralGauSim-T&\textbf{0.0393}&\textbf{1,821}\\
			\bottomrule
	\end{tabular}}
	\caption{Comparison of training time (s) and GPU memory (MB) cost on CiteSeer dataset per epoch.}
	\label{tab4} 
	\vspace{-1em}
\end{table}

In order to study the effect of different number of transition-graph nodes, we conducted a comparative \mbox{experiment} to observe the change of model performance by removing the number of transition-graph nodes for the proposed Transition-graph based Neural Gaussian Similarity method (NeuralGauSim-T). Here, for the initial transition graph with $500$ nodes, we remove $\{0, 100, 200, 300, 400\}$ \mbox{nodes} respectively. Similarly, for the IDGL-Anchor method, we also remove the same anchor graph nodes, i.e, $\{0, 100, 200, 300, 400\}$, for performance comparison purpose. Experimental results are shown in Figure \ref{graph1}. Note that for Physics dataset, we only report the results of NeuralGauSim-T since the previous IDGL-Anchor method exceeds the memory limit due to the initial fully kNN graph requires $\mathcal{O}(n^2)$ memory consumption. From these figures, we can observe that for NeuralGauSim-T, removing nodes does not bring about a significant decrease in model performance, and even an improvement in performance. Conversely, for the IDGL-Anchor method, removing nodes brings significant performance degradation, especially when the number of removed nodes tends to be large. The primary justification lies in the fact that the anchor graph is generated through random sampling from the original graph, whereas the construction of the transition-graph relies on model learning.

\section{Impact of Temperature Parameter}
To evaluate the impact of different temperature parameter $\tau$ of Gumbel-Softmax distribution, we conduct a comparative experiment by setting different temperature parameter $\tau \in \{0.25, 0.5, 0.75, 1\}$ on all datasets. Experimental results are shown in Figure \ref{graph4}. This figure illustrates the direct impact of modifying the temperature parameter on the performance of the proposed model. Different datasets may require different temperature parameter setting strategies. Note that the previous work \cite{neuralspar} suggests that using an annealing strategy for the temperature parameter. In this paper, following NodeFormer \cite{nodeformer}, we set the same temperature parameters $\tau=0.25$ for all datasets for comparison.  

\begin{figure}[tb]
	\centering
	\subfigure[Layer 1]{\includegraphics[width=0.35\columnwidth]{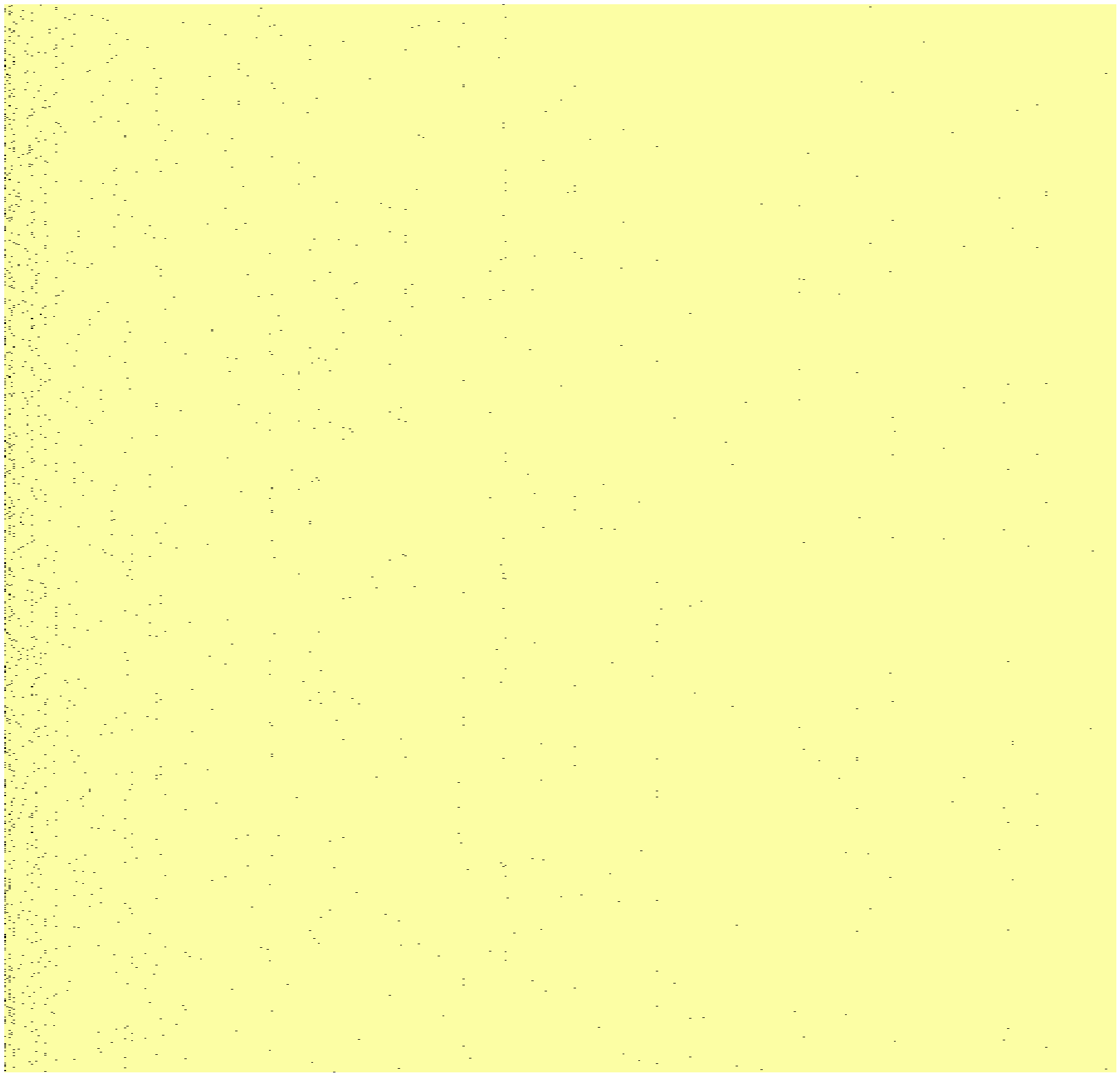}} \quad\quad\,\,
	\subfigure[Layer 2]{\includegraphics[width=0.35\columnwidth]{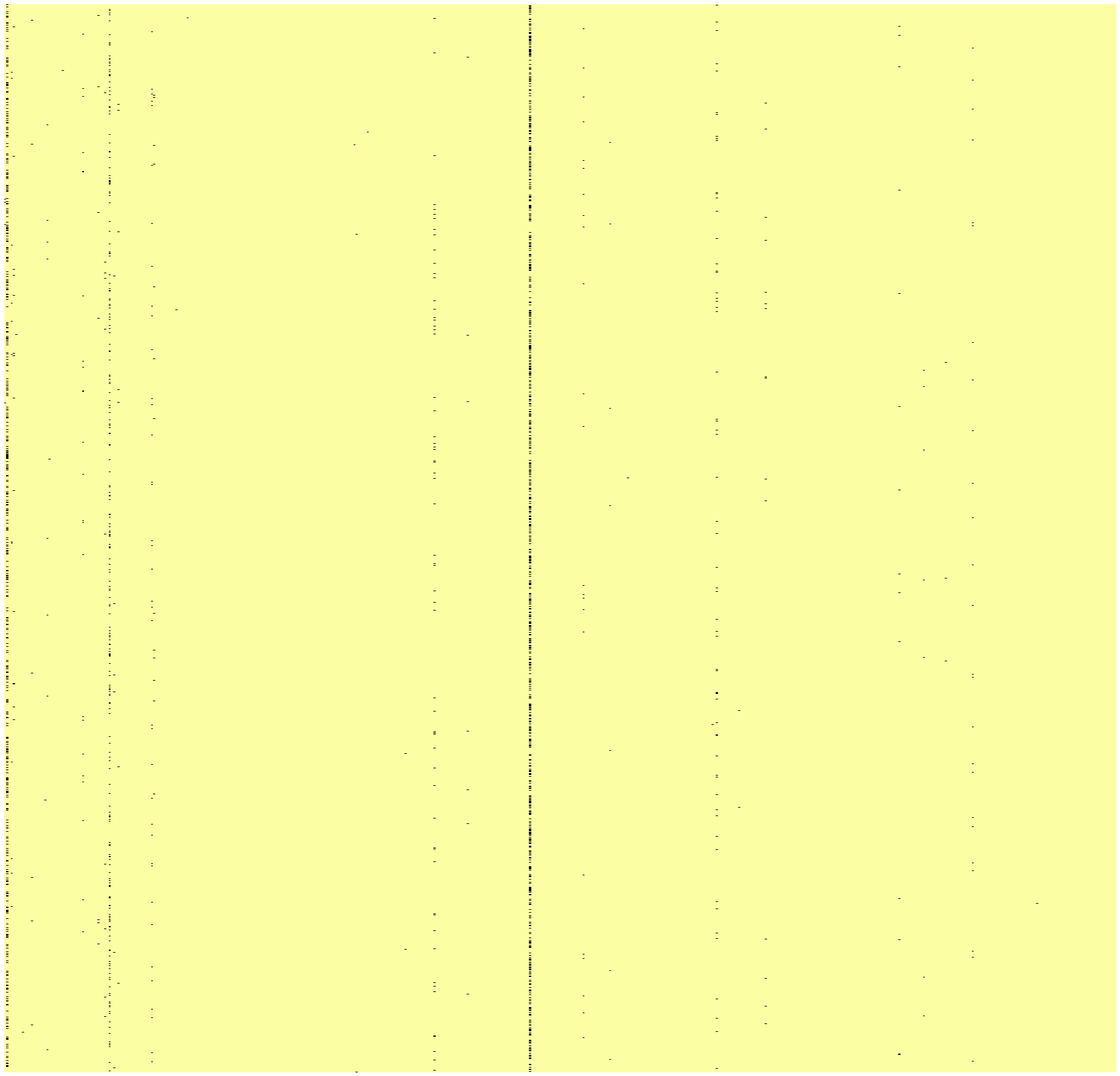}} 
	\caption{Visualization of Latent Graph Structure (given by two layers of NeuralGauSim) on CiteSeer dataset.}
	\label{graph3}
	\vspace{-1.5em}
\end{figure}

\section{Comparison of Complexity}
To compare the training time and GPU memory consumption, we statistic the training time (s) and GPU memory usage (MB) of the proposed method and previous methods, including IDGL, IDGL-Anchor, and NodeFormer, on CiteSeer dataset. Here, the number of transition nodes (anchor nodes for IDGL-Anchor) is set to be $500$. The experimental results are shown in Table \ref{tab4}. From this table, we have the following observations. First, we can observe that the proposed transition graph structure learning method significantly reduces the complexity of both time and memory consumption. Second, compared with the previous scalable methods, the proposed method exhibits comparable or even better efficiency.

\section{Visualization of Learned Graph Structure}
To investigate the learned graph structure of the proposed method, we visualize the graph structures learned by NeuralGauSim (given by two layers) on CiteSeer dataset, shown in Figure \ref{graph3}. From this figure, we can observe that different layers learn different graph structures, and this adaptive structure learning method holds potential utility in facilitating downstream classification tasks.   

\section{Conclusion}
In this paper, we focus on the differential graph structure learning framework and analyze the issue of structure sampling strategy. To fulfill the requirement of the edge sampling, we propose the Gaussian similarity modeling method and neural Gaussian similarity modeling method. To reduce the complexity, we develop a transition graph structure learning by transferring the initial nodes to transition nodes. Extensive experiments on graph and graph-enhanced application datasets demonstrate the effectiveness of the proposed methods. In future work, we aim to delve into effectiveness methods to tackle increasingly demanding scenarios, including heterogeneous graphs and multiplex graphs.

\section{Acknowledgment}
This work was supported in part by the National Natural Science Foundation of China under Grant 62206208, 62106185 and in part by the Fundamental Research Funds for the Central Universities under Grant XJSJ23021. 

\bibliography{reference}

\appendix
\setcounter{equation}{0}
\setcounter{figure}{0}
\setcounter{table}{0}

\section{Comparison with Gaussian Kernel}
Note that the previous work \cite{amgcn} utilizes the Gaussian kernel function as similarity measurement for graph structure learning. The modeling strategy of the Gaussian kernel is fundamentally different from the proposed method. Specifically, we present the Gaussian kernel function as follows
\begin{gather}
	\phi(x_i, x_j) = \exp(-\frac{\lVert x_i - x_j\rVert^2}{t})
\end{gather} 
where $t$ is the parameter of kernel function and $\lVert x_i - x_j\rVert^2$ is the square of Euclidean distance between node $v_i$ and $v_j$. Without loss of generality, we suppose the node representations are normalized using 2-norm normalization, and can obtain 
\begin{gather}
	\lVert x_i - x_j\rVert^2 = 2(1 - x_ix_j^{\top}).
\end{gather} 
Then, we can plot the relationship curve between sampling probability and Gaussian kernel similarity score in Figure \ref{graph_1}. From this figure, we can observe that the sampling probability and similarity of the edge $(v_i, v_j)$ also exhibit a linear relationship, wherein the greater the similarity value, the higher the probability of sampling that edge. The proposed methods can fulfill the requirement of the edge sampling probability, which entails an initial increase followed by a decrease as the similarity between node pairs diminishes. Here, we present the additional relation curves in Figure 2 to show the proposed neural Gaussian similarity modeling method.  

\begin{figure}[h]
	\centering
	\subfigure[$t=0.3$]{\includegraphics[width=0.326\columnwidth]{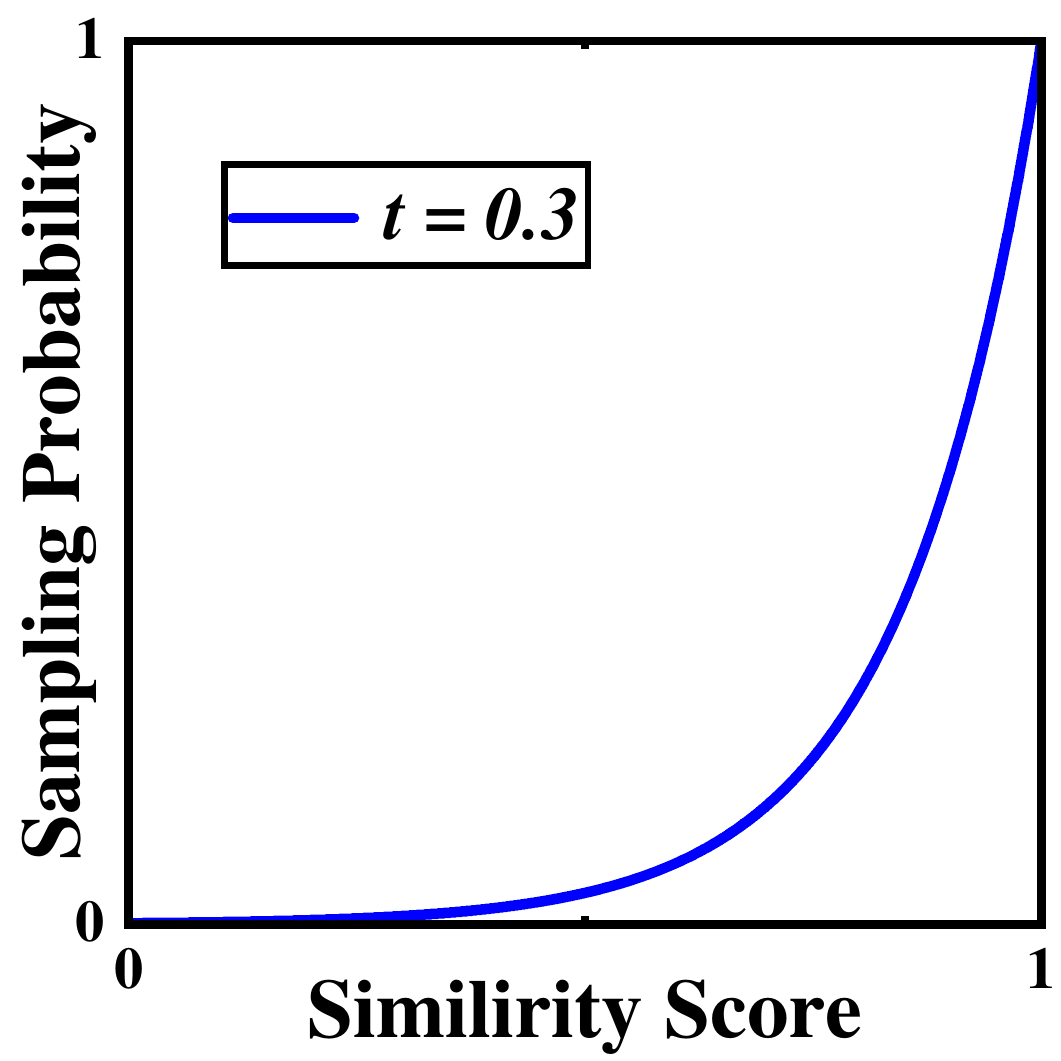}} 
	\subfigure[$t=0.5$]{\includegraphics[width=0.326\columnwidth]{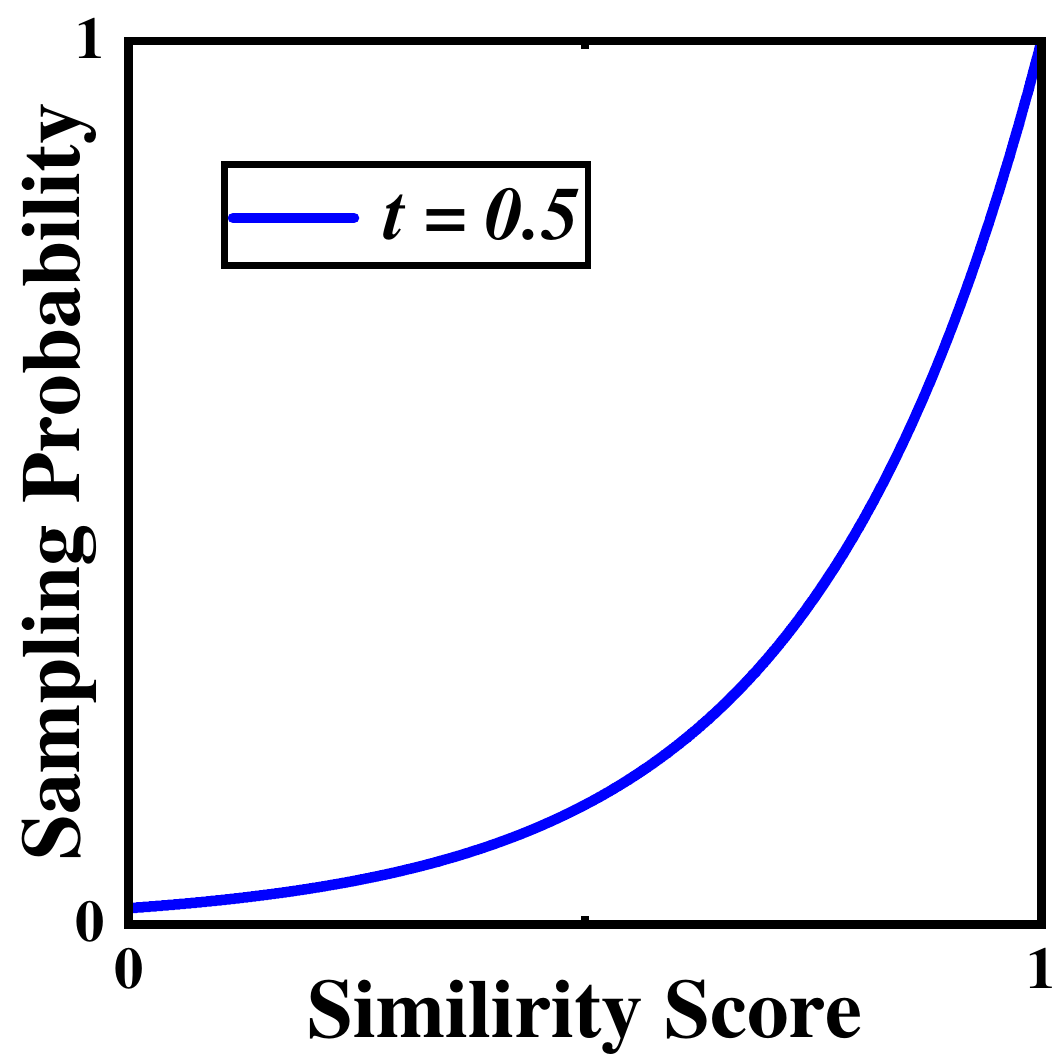}} 
	\subfigure[$t=1$]{\includegraphics[width=0.326\columnwidth]{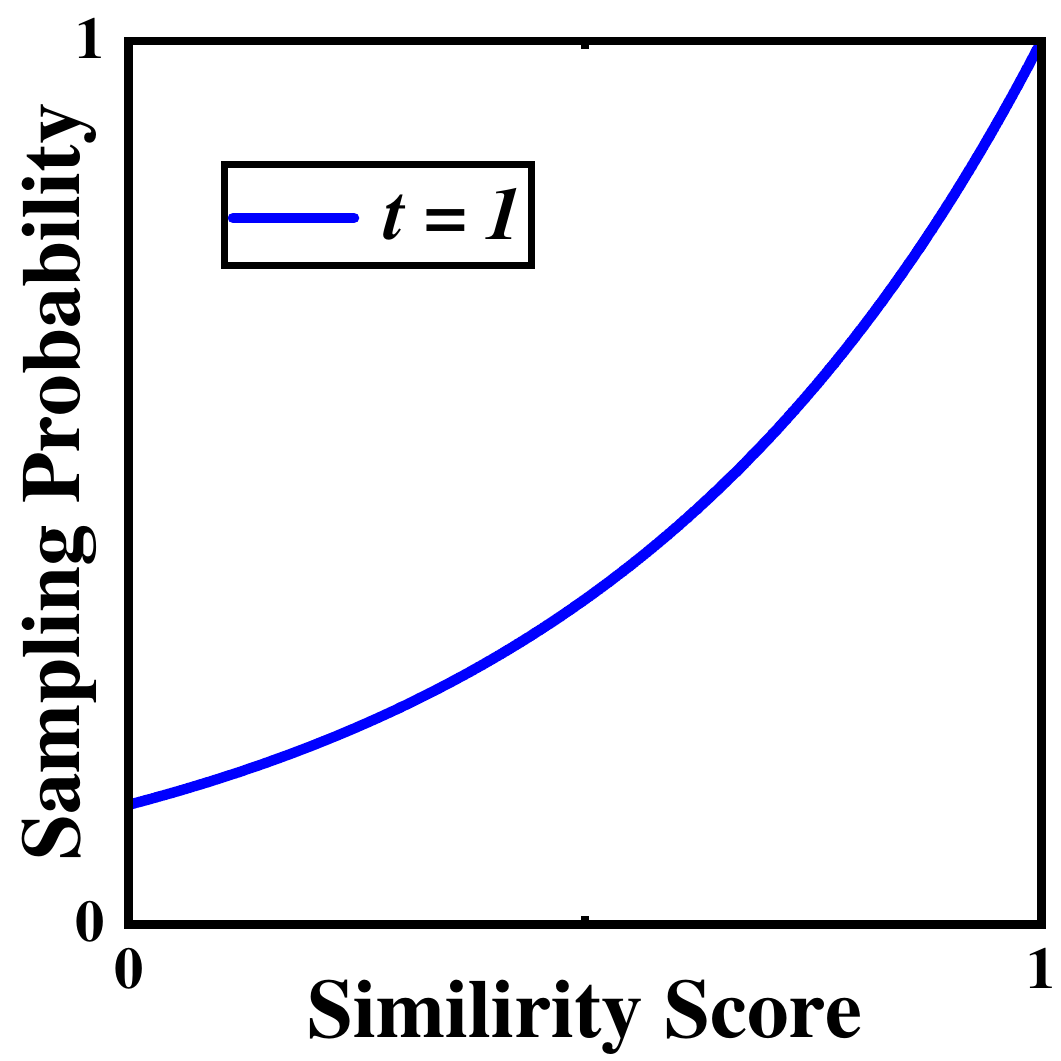}} 
	\caption{Relationship curve between sampling probability and Gaussian kernel similarity score.}
	\label{graph_1}
	\vspace{-1em}
\end{figure}

\begin{figure}[h]
	\centering
	\subfigure[GauSim]{\includegraphics[width=0.326\columnwidth]{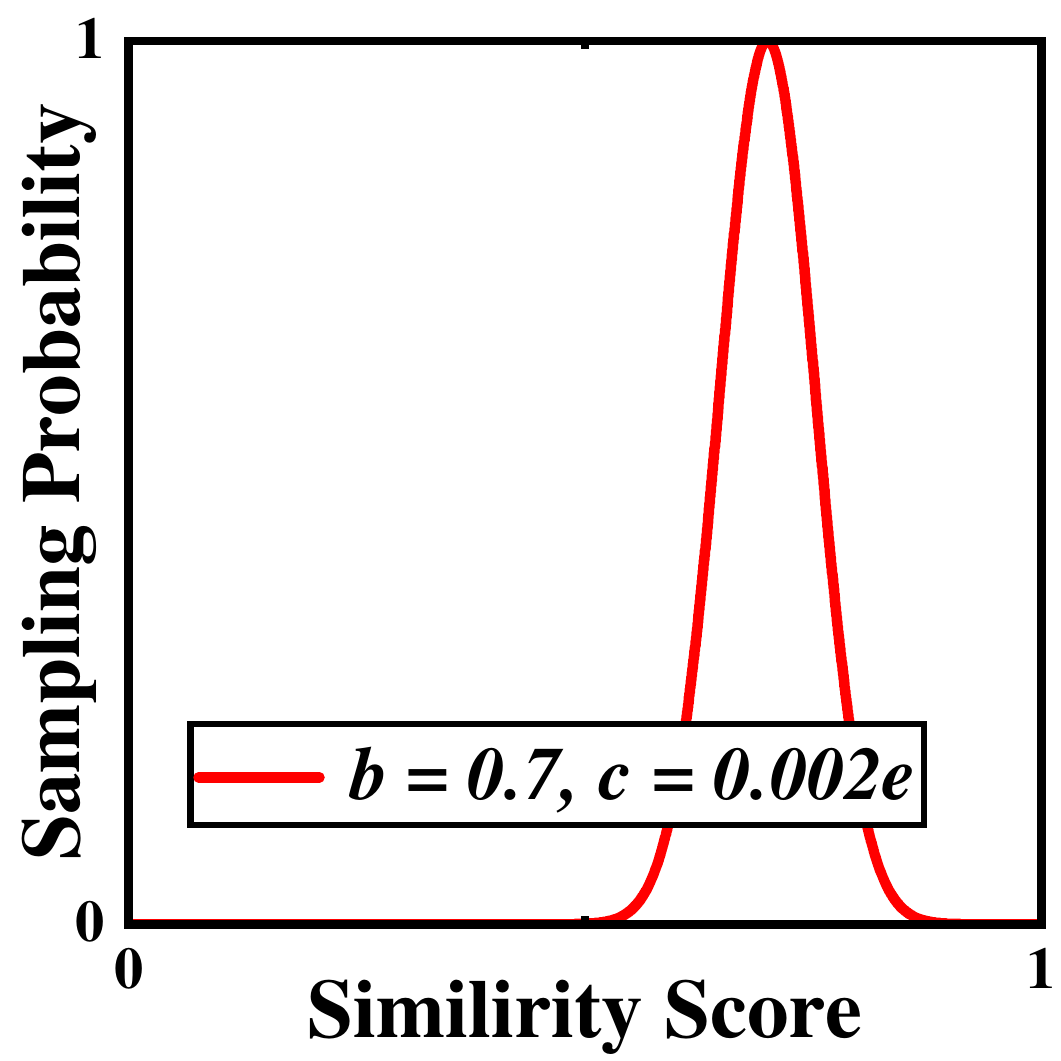}} 
	\subfigure[GauSim]{\includegraphics[width=0.326\columnwidth]{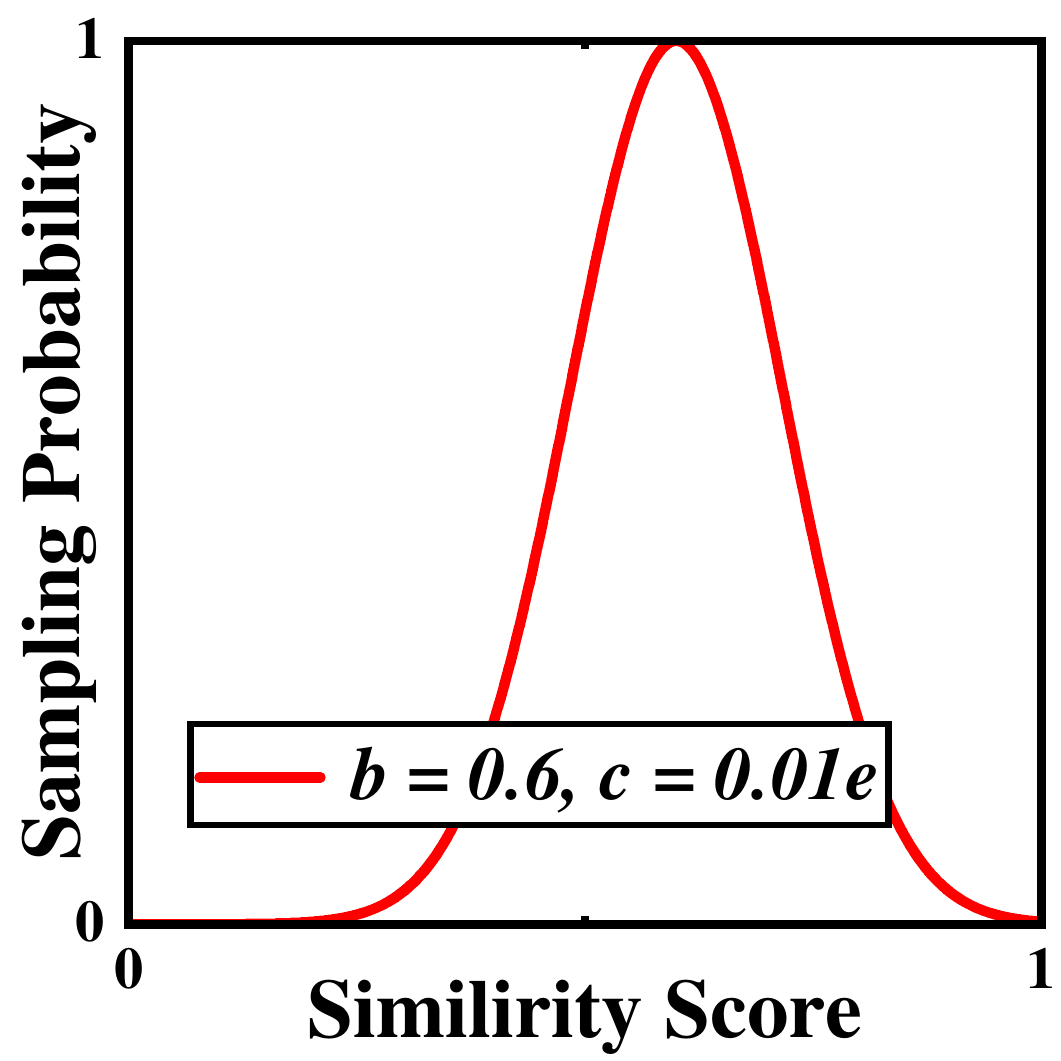}} 
	\subfigure[GauSim]{\includegraphics[width=0.326\columnwidth]{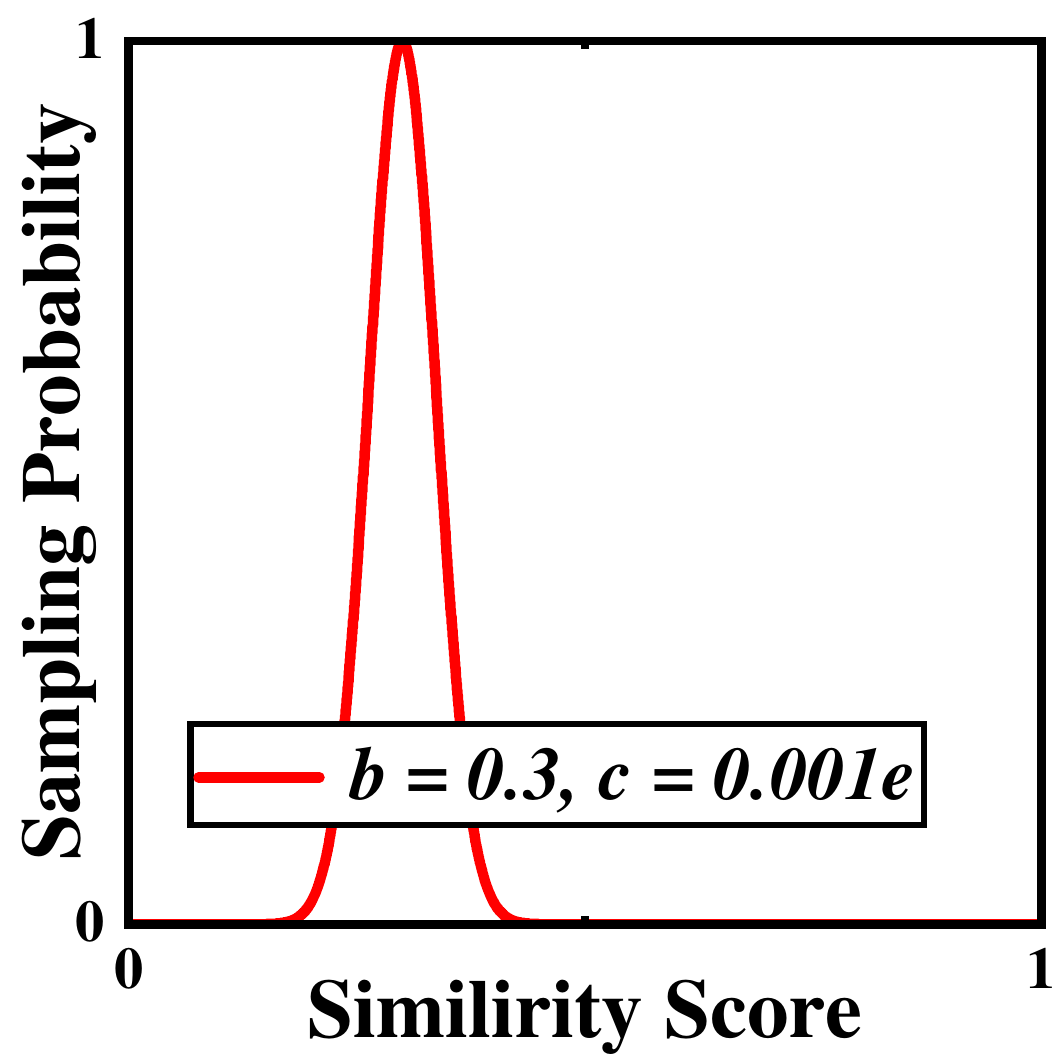}} 
	\caption{Relationship curve between sampling probability and the proposed Gaussian similarity similarity score.}
	\label{graph_2}
	\vspace{-1em}
\end{figure}

\section{Dataset Statistics} \label{ds}
We present detailed information for our used datasets in the experiment. The dataset statistics are shown in Table \ref{tab_1}. Specifically, CiteSeer and PubMed datasets are standard citation network benchmark datasets. In these networks, nodes represent papers, and the latent edges denote citations of one paper by another. Node features are the bag-of-words representation of papers, and node label is the academic topic of a paper. Chameleon and squirrel are two page-page networks on specific topics in Wikipedia. In these networks, nodes represent web pages and edges are mutual links between pages. Node features correspond to several informative nouns in the Wikipedia pages, and the nodes are classified into five categories in term of the number of the average monthly traffic of the web page. 
Coauthor CS and Coauthor Physics are co-authorship graphs. Here, nodes are authors, that are connected by an edge if they co-authored a paper; node features represent paper keywords for each author's papers, and class labels indicate most active fields of study for each author. 20News dataset is a collection of newsgroup documents, i.e. nodes, partitioned (nearly) evenly across 20 different
newsgroups. Mini-ImageNet dataset consists of $84\times 84$ RGB images from 100 different classes with 600 samples per class. Following \cite{nodeformer}, we choose 30 classes from the dataset, each with 600 images (nodes) that have 128 features extracted by CNN.   

\begin{table}[h]
	\centering
	\scalebox{0.76}{
		\begin{tabular}{|c|c|c|c|c|}
			\toprule
			\textsc{Dataset}&Context&\# Nodes&\# Features&\# Class\\
			\midrule
			\midrule
			CiteSeer&Citation network&3,327&3,703&6\\
			PubMed&Citation network&19,717&500&3\\
			Chameleon&Wikipedia network&2,277&2,325&5\\
			Squirrel&Wikipedia network&5,201&2,089&5\\
			CS&Coauthor network&18,333&6805&15\\
			Physics&Coauthor network&34,493&8,415&5\\
			20News&Text classification&9,607&236&10\\
			Mini-ImageNet&Image classification&18,000&128&30\\
			\bottomrule
	\end{tabular}}
	\caption{Statistics of graph benchmark datasets.}
	\label{tab_1}
	\vspace{-1em}
\end{table}

\section{Experimental Configuration}
All experiments are conducted with the following setting
\begin{itemize}
	\item Operating system: Ubuntu Linux release 20.04
	\item CPU: Intel(R) Xeon(R) Gold 5218 CPU @ 2.30GHz 
	\item GPU: NVIDIA GeForce RTX 3090 graphics card 
	\item Software version: Python 3.8, NumPy 1.20.1, Scipy 1.6.1,  PyTorch 1.11.0, PyTorch Geometric 2.0.4
\end{itemize}

\section{Current Limitations}
In this section, we discuss the limitations of the proposed model. First, some recent works develops several enhanced strategies, e.g., structure regularization and structure refinement. Incorporating these strategies may further improve the performance of the proposed model. Second, while this paper puts forth a scalable approach, addressing the issue of scaling to larger graphs, specifically those comprising millions of nodes, poses a more intricate challenge that requires further investigation and resolution in future research endeavors. Third, addressing the rising complexity of scenarios, encompassing heterogeneous graphs and multiplex graphs, emerges as a formidable challenge warranting further investigation. 

\end{document}